\documentclass{article}

\usepackage{microtype}
\usepackage{graphicx}
\usepackage{subcaption}
\usepackage{booktabs} 
\usepackage{multirow}
\usepackage{hyperref}

\newcommand{\methodname}{CORE-MTL}

\usepackage[accepted]{icml2026}

\usepackage{amsmath,amssymb,amsfonts,mathtools,amsthm}

\usepackage[capitalize,noabbrev]{cleveref}

\providecommand{\Nc}{\mathcal{N}}
\providecommand{\R}{\mathbb{R}}
\providecommand{\E}{\mathbb{E}}

\DeclareMathOperator{\Indep}{Indep}
\DeclareMathOperator{\Cov}{Cov}
\DeclareMathOperator{\Var}{Var}
\DeclareMathOperator{\Corr}{Corr}

\theoremstyle{plain}
\newtheorem{theorem}{Theorem}[section]
\newtheorem{proposition}[theorem]{Proposition}
\newtheorem{lemma}[theorem]{Lemma}

\theoremstyle{definition}
\newtheorem{definition}[theorem]{Definition}
\newtheorem{assumption}[theorem]{Assumption}
\theoremstyle{remark}

\usepackage[textsize=tiny]{todonotes}

\icmltitlerunning{CORE-MTL: Rethinking Gradient Balancing via Causal Orthogonal Representations}

\begin{document}

\twocolumn[
  \icmltitle{CORE-MTL: Rethinking Gradient Balancing \\ via Causal Orthogonal Representations}

  \icmlsetsymbol{equal}{*}

 \begin{icmlauthorlist}
  \icmlauthor{Chengfeng Wu}{tsinghua}
  \icmlauthor{Tao Zou}{buaa}
  \icmlauthor{Yanru Wu}{tsinghua}
  \icmlauthor{Jingge Wang}{tsinghua}
  
\end{icmlauthorlist}

\icmlaffiliation{tsinghua}{Shenzhen Key Laboratory of Ubiquitous Data Enabling, Tsinghua Shenzhen International Graduate School, Tsinghua University, Shenzhen, China}
\icmlaffiliation{buaa}{State Key Laboratory of Complex \& Critical Software Environment, Beihang University, Beijing, China}

\icmlcorrespondingauthor{Tao Zou}{zoutao@buaa.edu.cn}
\icmlkeywords{Multi-Task Learning, Causal Representation Learning, Disentangled Representation Learning}

  \vskip 0.3in
]

\printAffiliationsAndNotice{}  

\begin{abstract}
Multi-task learning (MTL) aims to construct a joint model for multiple tasks by sharing a common representation across domains. To achieve this goal, existing optimization-centric methods either balance task gradients or modify the shared architecture. However, as these approaches remain agnostic to the content of the shared representation, they fail to disentangle task-relevant structure from spurious context, leading to negative transfer and poor generalization. To overcome this limitation, we propose \textbf{C}ausal \textbf{O}rthogonal \textbf{R}epresentations for \textbf{M}ulti-\textbf{T}ask \textbf{L}earning (CORE-MTL), a causally motivated representation-centric framework that encourages a structured semantic-residual factorization of the shared representation, concentrating task-relevant structure in the semantic stream while relegating nuisance variation to the residual stream. We instantiate this framework in the visual domain by leveraging physical priors for structured scenes and statistical constraints for attributes. Theoretically,  our method enjoys a tighter out-of-distribution generalization bound than optimization-centric methods and reduces task gradient interference without explicit gradient projection or reweighting. Empirically, CORE-MTL consistently outperforms existing methods on visual multi-task benchmarks in both in-distribution and out-of-distribution settings. Code is publicly available at \url{https://github.com/Hope-Rita/CORE-MTL}.
\end{abstract}

\section{Introduction}

Multi-task learning (MTL) aims to improve sample efficiency and robustness by training a shared encoder jointly on several related tasks. However, MTL often suffers from negative transfer: optimizing for one task can harm others, and performance can degrade sharply when the data distribution or environment changes. Most existing approaches either try to reconcile task gradients by reweighting or projecting them at each update (e.g., GradNorm~\cite{DBLP:conf/icml/ChenBLR18}, PCGrad~\cite{DBLP:conf/nips/YuK0LHF20}, STCH~\cite{DBLP:conf/icml/0001Z00W024}), or modify the shared architecture so that each task attends to its own slice of the backbone features (e.g., MTAN~\cite{DBLP:conf/cvpr/LiuJD19}). While both directions have driven progress, we argue that a crucial aspect remains largely overlooked: the internal structure of the shared representation. Most approaches treat feature learning as a black box, ignoring how the entanglement between invariant semantic factors and spurious style factors limits both in-distribution (ID) stability and out-of-distribution (OOD) robustness.


We advocate a representation-level perspective on MTL that focuses on how the shared feature space is structured, rather than on gradient rebalancing or task-specific heads. A key observation is that gradient conflict is often a symptom rather than the root cause of negative transfer~\cite{DBLP:conf/iclr/ShiLZC023}. In practice, tasks are frequently coupled through shared nuisance-driven shortcuts—features that correlate with labels in-distribution but are not causally stable across tasks. When such nuisance variation is encoded in the shared representation, adjusting gradients alone cannot resolve the resulting conflicts. From this perspective, both gradient-balancing and architecture-based methods operate purely at the optimization level—reweighting losses, projecting updates, or changing task routing—without shaping what the shared representation itself is allowed to encode~\cite{DBLP:conf/nips/Hu0YYHSC22}. As a result, nuisance factors can persist as shared shortcuts, leading to negative transfer and brittle OOD generalization.

Motivated by this perspective, we propose \textbf{C}ausal \textbf{O}rthogonal \textbf{R}epresentations for \textbf{M}ulti-\textbf{T}ask \textbf{L}earning (\emph{CORE-MTL}), a representation-centric framework instantiated for visual multi-task learning that targets a causally motivated semantic–residual factorization that reduces nuisance-driven shortcuts under an assumed latent SCM. The method combines a structural decomposition of the shared representation with a training scheme that encourages this decomposition according to the assumed semantic–nuisance factorization. First, on the architectural side, CORE-MTL decomposes the shared features into a \emph{semantic stream} and a \emph{residual stream}. The semantic stream captures task-relevant information that is stable across tasks, while the residual stream absorbs task-irrelevant nuisance variation. Task predictors are built solely on the semantic stream, which prevents nuisance signals from acting as shortcuts during learning. Second, on the optimization side, CORE-MTL employs a unified objective that combines reconstruction, disentanglement, and counterfactual invariance losses to align the decomposition with the causal motivation. To assign clear roles to the semantic and residual streams, we ground the latent factors using physical priors for structured scenes and statistical and architectural constraints for attributes. Hard grounding provides stronger semantic anchoring, while soft grounding supports functional disentanglement. This training scheme encourages nuisance variation to be confined to the residual stream and promotes stable shared semantics, resolving negative transfer at the representation level while remaining compatible with standard MTL pipelines.

Our main contributions are summarized as follows:
\begin{itemize}
    \item From a causal perspective, we formalize the limitations of optimization-centric methods under an assumed latent SCM, proving that manipulating gradients on entangled representations can lead to an unavoidable OOD error floor. In contrast, our disentangled architecture guarantees a tighter OOD generalization bound and induces gradient orthogonality. This mechanism resolves conflicts at the representation level, allowing distinct tasks to synergize naturally without the need for rigid post-hoc gradient correction. 
    

   \item We propose \emph{CORE-MTL}, a representation-centric framework instantiated in the visual domain that encourages a structured factorization between task-relevant semantics and nuisance style factors via physical or generic grounding. By aligning task heads to the semantic stream via disentanglement and counterfactual objectives, our method resolves conflicts at the representation level rather than through post-hoc gradient surgery.

    \item Experiments on three in-distribution and two out-of-distribution visual multi-task benchmarks, comparing \emph{CORE-MTL} to ten representative multi-task baselines, show consistent improvements in both ID and OOD performance while remaining effective as the number of tasks and attributes increases.

\end{itemize}

\section{A Structural View of Multi-Task Learning}
\label{sec:theory}

In this section, we study multi-task learning from a structural perspective, focusing on optimization-centric methods that coordinate tasks only through their gradients. We analyze these methods under a simple causal view of shared representations and distribution shift.

\subsection{Multi-Task Learning Setup}
\label{subsec:setup}

We consider a multi-task learning problem with $K \ge 2$ tasks sharing an input space $\mathcal{X}$ and having task-specific output spaces $\{\mathcal{Y}_t\}_{t=1}^K$. We denote the training dataset by
$\mathcal{D} = \{ (x_i, \{y_{t,i}\}_{t=1}^K) \}_{i=1}^N$,
drawn from a joint distribution $P(X, Y_1, \dots, Y_K)$.
A shared encoder $\Phi_\theta : \mathcal{X} \to \mathbb{R}^d$ with parameters $\theta$ maps an input $x$ to a shared representation $h = \Phi_\theta(x)$, which is then consumed by task-specific heads $\{f_{\phi_t}\}_{t=1}^K$ to produce predictions $h_t(x) = f_{\phi_t}(h)$.

Given per-task loss functions $\mathcal{L}_t$, the standard MTL objective on a source domain $S$ is the weighted risk
\begin{equation}
\label{eq:mtl_objective}
    \min_{\theta, \{\phi_t\}}
    \;\sum_{t=1}^K w_t \,
    \mathbb{E}_{(x, y_t) \sim S}
    \big[
        \mathcal{L}_t \big( h_t(x), y_t \big)
    \big],
\end{equation}
where $w_t \ge 0$ denote task weights.

To study robustness, we distinguish between a source domain $S$ and a target domain $T$. The population risk of task $t$ on a domain $D \in \{S,T\}$ is
\begin{equation}
    \mathcal{E}_D(h_t)
    =
    \mathbb{E}_{(x,y_t) \sim D}
    \Big[ \mathcal{L}_t\big(h_t(x), y_t\big) \Big],
\end{equation}
and the domain gap of task $t$ is
\begin{equation}
    \Delta_t(h)
    =
    \mathcal{E}_T(h_t) - \mathcal{E}_S(h_t).
\end{equation}
In the following sections, we analyze a lower bound on this gap
$\Delta_t(h)$, aiming to identify structural properties of the shared
representation $h$ that prevent standard multi-task training schemes
from generalizing to target domains with style shifts.

\subsection{Optimization-Centric MTL: An OOD Impossibility Result}
\label{subsec:grad_impossibility}

To derive this limitation rigorously, we first formalize optimization-centric algorithms that dominate current MTL practice.

\begin{definition}[Gradient-balancing methods $\mathcal{G}$]
\label{def:grad_balancing}
Let $\Phi_\theta$ denote the shared encoder parameterized by $\theta$. A multi-task optimization algorithm belongs to $\mathcal{G}$ if it updates $\theta$ using directions of the form
\begin{equation}
    g(\theta)
    =
    \sum_{t=1}^K w_t(\theta)\, \nabla_\theta \mathcal{L}_t(\theta),
\end{equation}
where the weights $w_t(\theta)$ are allowed to  depend on the current gradients $\{\nabla_\theta \mathcal{L}_t(\theta)\}_t$, but the algorithm does not impose any structural constraint on the representation $h$.
\end{definition}

In other words, methods such as GradNorm, PCGrad and MGDA may differ in how they choose $w_t(\theta)$, but all operate purely in gradient space and treat the shared feature geometry as fixed.

To expose the OOD limitation of $\mathcal{G}$, we adopt a simple causal model in which inputs are generated from invariant semantic factors and spurious residual factors. Let $Z_s$ denote invariant semantics and $Z_r$ denote residual factors capturing nuisance variation such as background or illumination, and assume at the population level
\begin{equation}
    Z_s \;\perp\; Z_r,
\end{equation}
with observations obtained via a generative mechanism $X = g(Z_s,Z_r)$.

We now specialize this causal view to a linear-Gaussian structural causal model in which the encoder mixes $Z_s$ and $Z_r$ by a fixed rotation.

\begin{assumption}[Linear-Gaussian SCM with entanglement]
\label{assumption:linear_scm}
Let $Z_s \sim \mathcal{N}(0, \Sigma_s)$ and $Z_r \sim \mathcal{N}(0, \Sigma_r^S)$ be independent latent semantic and residual factors. The encoder learns a representation
\begin{equation}
    \hat{Z} = \Phi_\psi(X) = R_\psi [Z_s; Z_r],
\end{equation}
where $R_\psi$ is a rotation matrix with angle $\psi$ measuring how strongly $Z_s$ and $Z_r$ are mixed~\cite{DBLP:conf/iclr/LocatelloBLRGSB19}. A domain shift changes only the residual covariance, from $\Sigma_r^S$ (source) to $\Sigma_r^T$ (target), while the distribution of $Z_s$ remains fixed. We adopt this standard invariant semantic assumption \cite{DBLP:journals/corr/abs-1907-02893} to focus our analysis on spurious correlations induced by nuisance shifts $\Sigma_r$, isolating them from label distribution shifts.
\end{assumption}

\paragraph{Remark 1.}
\textbf{(Inevitable Spurious Reliance)} Under Assumption~\ref{assumption:linear_scm}, $\psi \neq 0$ implies that $\hat{Z}$ structurally mixes semantics with noise, forcing any downstream predictor to \emph{incorporate} nuisance variations alongside robust features.

With this structural setup, we can show that any algorithm in $\mathcal{G}$ faces a non-vanishing OOD error floor whenever the representation remains entangled. 

\begin{theorem}[OOD lower bound for gradient balancing]
\label{thm:impossibility}
Under Assumption~\ref{assumption:linear_scm}, for any optimization-centric algorithm $A \in \mathcal{G}$, there exists a linear task (which is $L$-Lipschitz) such that if the learned representation remains entangled, i.e., $\psi \neq 0$, there exists a target domain shift where the generalization gap is lower-bounded by
\begin{equation}
    \mathcal{E}_T(h) - \mathcal{E}_S(h)
    \;\geq\;
    c \,\sin^2(\psi)\,
    \big\| \Sigma_r^T - \Sigma_r^S \big\|_F,
\end{equation}
where $c > 0$ is a constant depending on the task geometry.
\end{theorem}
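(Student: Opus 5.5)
The plan is to work in the simplest nontrivial instance of Assumption~\ref{assumption:linear_scm}. Take scalar $Z_s$ and $Z_r$, so that $R_\psi$ is a $2\times 2$ rotation, and use squared loss. The key observation is that every algorithm $A\in\mathcal{G}$ only reweights gradients. It therefore cannot change $R_\psi$, and its output is just some linear head $h(x)=w^\top \hat Z$ on the entangled representation. Whatever head $A$ selects, we read off its effective loadings on the latents,
\begin{equation*}
(a_s,a_r)^\top = R_\psi^\top w .
\end{equation*}
Choose the task label $Y=\beta^\top Z_s$ with $\|\beta\|\le L$, so the task is linear and $L$-Lipschitz. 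Then the source risk is
\begin{equation*}
\mathcal{E}_S(h)=(a_s-\beta)^\top\Sigma_s(a_s-\beta)+a_r^\top\Sigma_r^S a_r ,
\end{equation*}
and the target risk is the same expression with $\Sigma_r^S$ replaced by $\Sigma_r^T$. The semantic term cancels in the gap because $Z_s$ is invariant, which leaves
\begin{equation*}
\mathcal{E}_T(h)-\mathcal{E}_S(h)=a_r^\top(\Sigma_r^T-\Sigma_r^S)a_r .
\end{equation*}

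The next step is to show that $a_r$ is bounded away from zero by a multiple of $\sin\psi$. The argument uses a structural reading of Remark~1 and Definition~\ref{def:grad_balancing}: an algorithm in $\mathcal{G}$ imposes no constraint on $h$, so the head the task uses is aligned with the coordinate of $\hat Z$ that carries semantics in the unrotated frame. In the $2\times2$ case this coordinate is $e_1$. Then $w=\alpha e_1$, with $\alpha$ fitting the semantic signal, and $(a_s,a_r)=\alpha(\cos\psi,-\sin\psi)$, so $a_r^2=\alpha^2\sin^2\psi$.

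This is the step I expect to be the main obstacle. If the head were allowed full freedom, it could choose $w=R_\psi e_1$ and undo the rotation entirely. The proof must therefore rule out arbitrary heads explicitly, and I would try doing so in one of two ways:
\begin{itemize}
\item declare that the task geometry fixes the readout direction, so that the head uses a single fixed coordinate of the shared representation;
\item argue that, with $K\ge2$ tasks sharing the rotated representation, no single rotation-undoing readout is available for all tasks, and pick the task for which the residual loading is at least of order $|\sin\psi|$.
\end{itemize}
Either way, the constant $c$ absorbs $\alpha^2$, and $\alpha$ is bounded below because the head must fit $\beta\neq0$.

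Finally, we are free to choose the target shift. Take
\begin{equation*}
\Sigma_r^T=\Sigma_r^S+\delta\,uu^\top,\qquad u=a_r/\|a_r\|,\quad \delta>0 .
\end{equation*}
This is a valid positive semidefinite covariance, and it gives
\begin{equation*}
\|\Sigma_r^T-\Sigma_r^S\|_F=\delta,\qquad a_r^\top(\Sigma_r^T-\Sigma_r^S)a_r=\delta\|a_r\|^2 \ge \alpha^2\sin^2(\psi)\,\|\Sigma_r^T-\Sigma_r^S\|_F .
\end{equation*}
This establishes the bound with $c=\alpha^2$, or with $c=\min_t\alpha_t^2$ in the multi-task version of the argument. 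In higher dimensions the same construction applies with $R_\psi$ a Givens rotation acting in a single semantic–residual plane.
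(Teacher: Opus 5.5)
Your proposal is essentially the paper's proof. The paper likewise has the head read only the single entangled coordinate $\hat Z=\cos\psi\,Z_s+\sin\psi\,Z_r$ (your first option, imposed by fiat rather than derived, so the obstacle you flag is settled only by that modeling choice), uses the source-optimal coefficient $\beta^\star=w^\star\cos\psi/(\cos^2\psi+\sin^2\psi\,\sigma_{r,S}^2)$ in place of your unspecified $\alpha$ (the paper's $w^\star$ is your $\beta$), and gets the gap $(\beta^\star)^2\sin^2\psi\,(\sigma_{r,T}^2-\sigma_{r,S}^2)$ from a single-coordinate variance increase, which your rank-one shift $\delta uu^\top$ reproduces with equality.

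Drop your second option: every task depends only on $Z_s$, so each task-specific head can undo $R_\psi$ on its own. Also note that $\alpha$ is not bounded below uniformly, since the source-optimal value is proportional to $\cos\psi$ and vanishes at $\psi=\pi/2$; the paper's constant $c_0$ has the same degeneracy.
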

\begin{proof}
See Appendix~\ref{app:proof_impossibility}.
\end{proof}

\begin{figure*}[t]
  \centering
  \includegraphics[width=\textwidth]{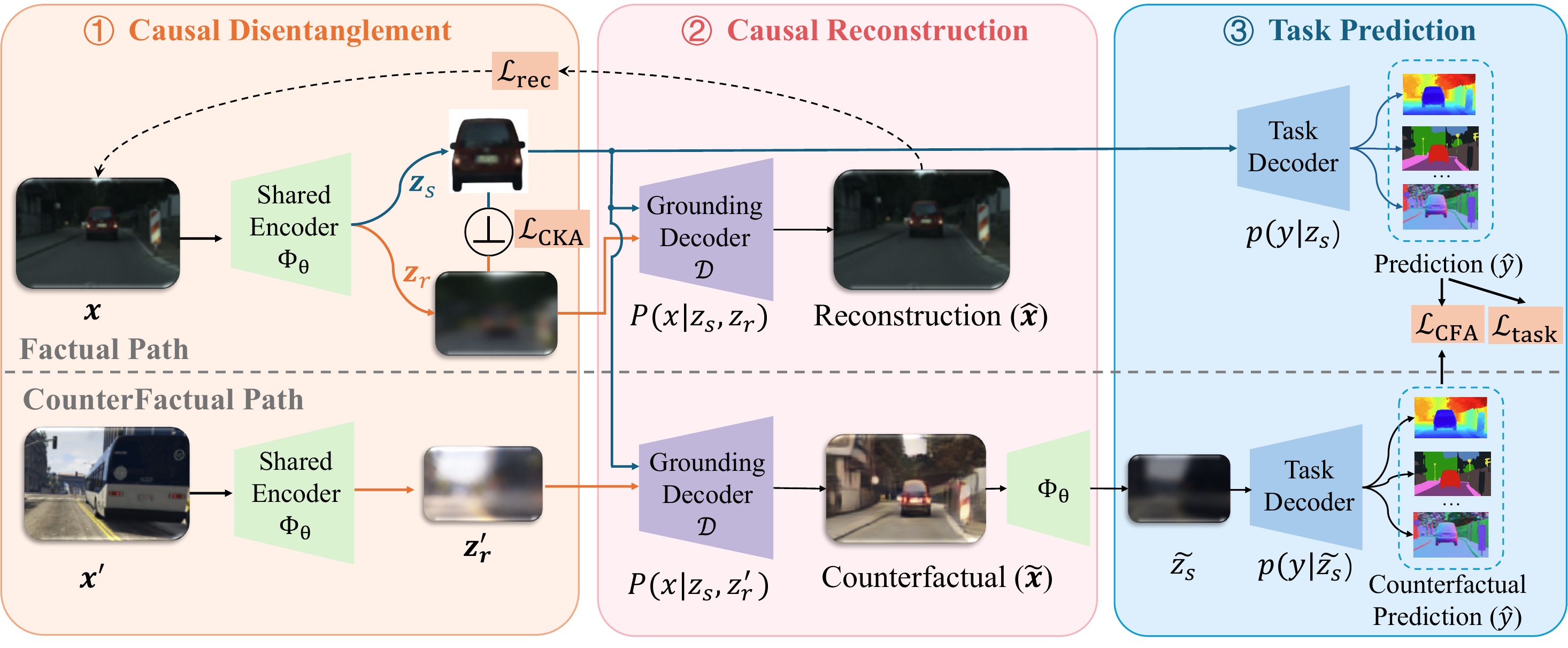}
  \caption{Overview of the CORE-MTL architecture. }
  \label{fig:framework}
\end{figure*}

\paragraph{Remark 2.}
\textbf{(Irreducible OOD Error Floor)} This lower bound establishes that no amount of gradient balancing can eliminate the $\sin^2\psi$-scaled error; robustness is fundamentally limited by representation geometry, not optimization dynamics.

\subsection{A Representation-Level Solution: Disentangled Architectures and Guarantees}
\label{subsec:disentangled_bounds}
The lower bound above suggests that robustness cannot be achieved by manipulating gradients alone; we must change the structure of the shared representation itself. We therefore consider architectures that explicitly disentangle semantic and residual streams and restrict task heads to depend only on the semantic stream. Formally, we factorize the encoder output as $(\hat{Z}_s, \hat{Z}_r) = \Phi_\theta(X)$, where $\hat{Z}_s$ is a semantic stream intended to capture invariant, task-relevant structure and $\hat{Z}_r$ is a residual stream that absorbs nuisance variation treated as task-irrelevant noise. Task predictions are built only from $\hat{Z}_s$, i.e., $h_t(x) = f_{\phi_t}(\hat{Z}_s)$.

To guarantee robustness, we rely on the structural properties of the learned representation. We model the semantic stream as a function $\hat Z_s = g_s(Z_s, Z_r)$ and define the leakage coefficient $\lambda_{\mathrm{leak}}$ as the partial Lipschitz constant of $g_s$ with respect to the residual factors $Z_r$:
\begin{equation}
\label{eq:leakage}
\lambda_{\mathrm{leak}} \triangleq \sup_{z_s}\sup_{z_r \neq z_r'} 
\frac{\|g_s(z_s,z_r)-g_s(z_s,z_r')\|}{\|z_r-z_r'\|}.
\end{equation}
A value of $\lambda_{\mathrm{leak}}=0$ implies a structurally disentangled encoder where $\hat Z_s$ is invariant to $Z_r$. Under this definition, we derive the following bound.

\begin{theorem}[Disentangled OOD generalization bound]
\label{thm:ood-disentangled}
Let $\mathcal{H}$ be the hypothesis class of models that predict solely from the semantic stream $\hat Z_s$. Under Assumption~2.2, there exist constants $C_{\mathrm{cap}}\ge 0$ and $\alpha>0$ such that for any $h\in\mathcal{H}$,
\begin{equation}
\mathcal{E}_T(h) - \mathcal{E}_S(h) \le C_{\mathrm{cap}} + \alpha\,\lambda_{\mathrm{leak}}\, W_1\big(P_S(Z_r),P_T(Z_r)\big),
\label{eq:disentangled-bound}
\end{equation}
where $W_1$ is the Wasserstein--1 distance between the source and target residual distributions.
\end{theorem}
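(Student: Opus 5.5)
We prove \cref{thm:ood-disentangled} by a coupling argument. The idea is to exploit the fact that, under Assumption~\ref{assumption:linear_scm}, the law of $Z_s$ is identical in $S$ and $T$ and independent of $Z_r$; only the residual marginal moves.

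Write the per-example loss of $h\in\mathcal{H}$ as a function of the latents,
\[
\ell(z_s,z_r,y)=\mathcal{L}_t\bigl(f_{\phi_t}(g_s(z_s,z_r)),y\bigr).
\]
We assume the loss is $L_\ell$-Lipschitz in its prediction argument and the head $f_{\phi_t}$ is $L_f$-Lipschitz; these are the Lipschitz assumptions already used for \cref{thm:impossibility}. Since $Y_t$ depends only on the invariant semantics $Z_s$, the conditional law of $y$ given $(z_s,z_r)$ is the same in both domains.

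The gap then splits as
\[
\mathcal{E}_T(h)-\mathcal{E}_S(h)=\bigl(\text{residual-shift term}\bigr)+\bigl(\text{remainder}\bigr).
\]
The remainder absorbs any mismatch not explained by the residual shift, such as approximation error of the semantic stream, finite-capacity effects, or conditional label shift. We bound it by a constant $C_{\mathrm{cap}}\ge 0$, and it is exactly $0$ in the idealized SCM where the $Z_s$ law and the label mechanism are fixed.

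For the residual-shift term, take the optimal $W_1$ coupling $\pi$ of $P_S(Z_r)$ and $P_T(Z_r)$. Draw $Z_s$ independently from its common law, so the joint distributions of $(Z_s,Z_r)$ in $S$ and $T$ are coupled through $\pi$ with a shared $z_s$. Then
\[
\bigl|\mathbb{E}_T\ell-\mathbb{E}_S\ell\bigr|
\le \mathbb{E}_{z_s}\,\mathbb{E}_{(z_r,z_r')\sim\pi}\,
L_\ell L_f\,\bigl\|g_s(z_s,z_r)-g_s(z_s,z_r')\bigr\|.
\]
By the definition of $\lambda_{\mathrm{leak}}$ in \eqref{eq:leakage}, the inner norm is at most $\lambda_{\mathrm{leak}}\|z_r-z_r'\|$. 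Integrating over $\pi$ gives $\lambda_{\mathrm{leak}}\,W_1\bigl(P_S(Z_r),P_T(Z_r)\bigr)$. Setting $\alpha=L_\ell L_f$ yields \eqref{eq:disentangled-bound}.

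Equivalently, $z_r\mapsto \mathbb{E}_{z_s,y}\,\ell(z_s,z_r,y)$ is $\alpha\lambda_{\mathrm{leak}}$-Lipschitz, so Kantorovich--Rubinstein duality gives the same bound directly.

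The main obstacle is that the loss depends on $y$ as well as on the prediction, and we need Lipschitz control uniformly in $y$. The first thing to try is to assume $\mathcal{L}_t(\cdot,y)$ is $L_\ell$-Lipschitz for every $y$, for example on a bounded output space. We then condition on $(z_s,y)$, which is legitimate because $y\perp Z_r\mid Z_s$ under the SCM. With that conditioning, the coupling step goes through pointwise.

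A secondary subtlety is that $\sup_{z_s}$ in $\lambda_{\mathrm{leak}}$ must be attained uniformly, which the definition already provides. Any non-Lipschitz or label-shift contributions are placed in $C_{\mathrm{cap}}$, which is why the statement only asserts the existence of such a constant.
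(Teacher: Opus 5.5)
Your proof is correct. Its key step is the same as the paper's: a product coupling that shares $Z_s$ and couples $Z_r$ optimally, followed by the partial-Lipschitz definition of $\lambda_{\mathrm{leak}}$. The difference is where you apply the coupling.

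The paper uses the coupling only to show $W_1(P_S(\hat Z_s),P_T(\hat Z_s))\le \lambda_{\mathrm{leak}}\,W_1(P_S(Z_r),P_T(Z_r))$. It then plugs this into a cited ``standard'' Wasserstein domain-adaptation bound, $\mathcal{E}_T(h)-\mathcal{E}_S(h)\le C_{\mathrm{cap}}+\alpha_0 W_1(P_S(\hat Z_s),P_T(\hat Z_s))$. There, $C_{\mathrm{cap}}$ must absorb the ideal-joint-error/capacity term such bounds carry. That term is needed because the conditional law of $Y$ given $\hat Z_s$ can itself shift whenever $\lambda_{\mathrm{leak}}>0$.

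You instead couple the full triples $(Z_s,Z_r,Y)$. You use $(Z_s,Y)\perp Z_r$, which follows from $Z_s\perp Z_r$ and $Y\perp Z_r\mid Z_s$, so that $(z_s,y)$ can be held fixed while $z_r$ moves, and then bound the loss difference pointwise. Your Kantorovich--Rubinstein remark is the same argument in dual form.

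This route buys three things:
\begin{itemize}
\item It needs no external domain-adaptation bound.
\item It makes the hypotheses explicit: Lipschitz loss uniformly in $y$, uniformly Lipschitz heads over $\mathcal{H}$, and an invariant label mechanism.
\item It shows that in the idealized SCM you can take $C_{\mathrm{cap}}=0$ and $\alpha=L_\ell L_f$, which is sharper than the paper's statement.
\end{itemize}

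The paper's route is less explicit. Its advantage is that $C_{\mathrm{cap}}$ can soak up violations of the invariant-mechanism assumption without redoing the argument. In your proof, such violations must be added separately to the remainder term.

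One wording fix: in $\lambda_{\mathrm{leak}}$ the supremum over $z_s$ need not be \emph{attained}. What you use is only that the bound holds uniformly in $z_s$, which the definition gives.
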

\textit{Proof.} See Appendix~\ref{app:proof-ood-disentangled}. 

\paragraph{Remark 3.} 
\textbf{(Linear Robustness Scaling)} Inequality \eqref{eq:disentangled-bound} establishes that, under residual shifts, the OOD generalization gap is bounded by a term scaling linearly with the leakage coefficient $\lambda_{\mathrm{leak}}$. This implies that geometric disentanglement is a fundamental prerequisite for robustness: suppressing the encoder's sensitivity to residual variations serves to decouple the generalization bound from the magnitude of distribution shifts.

Beyond robustness, disentangling the shared representation also changes how task losses interact through the backbone. We analyze the interaction between the primary task gradient and the auxiliary residual gradient within the shared backbone $H$.

\begin{proposition}[Gradient Orthogonality via Disentanglement]
\label{prop:grad-interference}
Consider a task loss $\mathcal{L}_{\mathrm{task}}$ dependent on $Z_s$ and a residual regularization loss $\mathcal{L}_{\mathrm{res}}$ dependent on $Z_r$. Let $g_{\mathrm{task}}$ and $g_{\mathrm{res}}$ be their gradients with respect to the shared representation $H$. Assuming the task heads have bounded gradients almost surely, there exist constants $c \ge 0$ and $\delta \ge 0$ such that:
\begin{equation}
\label{eq:grad-interference}
\mathbb{E}\left[\cos^2(g_{\mathrm{task}}, g_{\mathrm{res}})\right] \;\le\; c \cdot \mathrm{CKA}(Z_s, Z_r) + \delta,
\end{equation}
where $\mathrm{CKA}(Z_s, Z_r)$ is the independence measure minimized during training.
\end{proposition}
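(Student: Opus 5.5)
The plan is to write both gradients with respect to the shared representation $H$ through the chain rule, and reduce the squared cosine to a normalized inner product of two Jacobian-weighted vectors. Let $Z_s = E_s(H)$ and $Z_r = E_r(H)$ denote the two projection heads applied to $H$. Then
\begin{equation*}
g_{\mathrm{task}} = J_s^\top u_s, \qquad g_{\mathrm{res}} = J_r^\top u_r,
\end{equation*}
where $J_s=\partial Z_s/\partial H$ and $J_r=\partial Z_r/\partial H$. Here $u_s=\partial \mathcal{L}_{\mathrm{task}}/\partial Z_s$ and $u_r=\partial \mathcal{L}_{\mathrm{res}}/\partial Z_r$ are the upstream gradients, with $\|u_s\|,\|u_r\|\le B$ almost surely by the bounded-gradient assumption. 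The cosine is then $\langle J_s^\top u_s, J_r^\top u_r\rangle/(\|g_{\mathrm{task}}\|\|g_{\mathrm{res}}\|)$. The first move is to lower bound the denominators by a constant $m>0$ on an event of high probability. The complementary event, together with the fact that $\cos^2\le 1$, is absorbed into $\delta$.

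The second step is to split the numerator into a \emph{structural} part and a \emph{fluctuation} part. I would work in a local linearization regime, where $E_s,E_r$ are approximately linear maps $W_s,W_r$ with $Z_s\approx W_sH$, $Z_r\approx W_rH$ (the linear-Gaussian picture of Assumption~\ref{assumption:linear_scm}). Then $\langle g_{\mathrm{task}},g_{\mathrm{res}}\rangle \approx u_s^\top W_s W_r^\top u_r$. The cross-operator $W_s W_r^\top$ relates directly to the cross-covariance:
\begin{equation*}
\Cov(Z_s,Z_r)=W_s\Sigma_H W_r^\top .
\end{equation*}
Assuming $\Sigma_H$ is well conditioned with smallest eigenvalue $\sigma_{\min}>0$, this gives $\|W_sW_r^\top\|_F^2\le \sigma_{\min}^{-2}\|\Cov(Z_s,Z_r)\|_F^2$. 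The linearization error is controlled by the Lipschitz constants of the heads and contributes to $\delta$.

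The third step connects the cross-covariance to linear CKA. By definition,
\begin{equation*}
\mathrm{CKA}(Z_s,Z_r)=\frac{\|\Cov(Z_s,Z_r)\|_F^2}{\|\Cov(Z_s)\|_F\,\|\Cov(Z_r)\|_F}.
\end{equation*}
Hence $\|\Cov(Z_s,Z_r)\|_F^2 \le \|\Cov(Z_s)\|_F\|\Cov(Z_r)\|_F\,\mathrm{CKA}(Z_s,Z_r)$, where the covariance norms are bounded constants. Taking expectations and applying Cauchy--Schwarz to $(u_s^\top M u_r)^2\le B^4\|M\|_F^2$ then yields
\begin{equation*}
\E[\cos^2]\le \frac{B^4\,\|\Cov(Z_s)\|_F\|\Cov(Z_r)\|_F}{m^4\sigma_{\min}^2}\,\mathrm{CKA}+\delta .
\end{equation*}
This has the claimed form with $c$ equal to the leading factor.

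The main obstacle is the second step. CKA measures statistical dependence of the \emph{activations}, whereas the cosine involves the \emph{Jacobians}. In the nonlinear case there is no exact identity linking them, so the honest route is to make the linearization explicit and charge all deviations (nonlinearity of the heads, and the small-norm event in the denominators) to $\delta$. This is why the statement only asserts existence of some $\delta\ge 0$. If the heads are exactly linear and the denominators are bounded below, then $\delta=0$.
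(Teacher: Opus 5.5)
The proposal has a genuine gap in Step 2. Your route otherwise matches the paper's: chain rule, Cauchy--Schwarz on $u_s^\top J_sJ_r^\top u_r$, linearization, then passing from $J_sJ_r^\top$ to the cross-covariance and on to CKA. Your handling of the denominators via a high-probability event, and of the CKA normalizers, is more careful than the paper's.

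The step that fails is $\|W_sW_r^\top\|_F^2\le\sigma_{\min}^{-2}\|\Cov(Z_s,Z_r)\|_F^2$, because well-conditioning of $\Sigma_H$ does not suffice. Zero cross-covariance, $W_s\Sigma_HW_r^\top=0$, says the rows of $W_s$ and $W_r$ are orthogonal in the $\Sigma_H$-inner product. The gradient inner product $u_s^\top W_sW_r^\top u_r$ measures Euclidean orthogonality instead. The two agree only when $\Sigma_H$ is a multiple of the identity.

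Here is a concrete counterexample. Take $\Sigma_H=\mathrm{diag}(1,2)$ and exactly linear scalar heads $Z_s=a^\top H$, $Z_r=b^\top H$ with $a=(1,1)^\top$ and $b=(2,-1)^\top$.
\begin{itemize}
\item Then $\Cov(Z_s,Z_r)=a^\top\Sigma_Hb=0$, so $\mathrm{CKA}(Z_s,Z_r)=0$.
\item Yet $g_{\mathrm{task}}=u_sa$ and $g_{\mathrm{res}}=u_rb$ give $\cos^2(g_{\mathrm{task}},g_{\mathrm{res}})=(a^\top b)^2/(\|a\|^2\|b\|^2)=1/10$ whenever $u_s,u_r\neq 0$.
\end{itemize}
This also refutes your closing claim that linear heads with bounded-below denominators yield $\delta=0$.

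The missing ingredient is isotropy of the backbone features. This is exactly what the paper assumes in its Step 2: approximately whitened features, $\frac1n\tilde H\tilde H^\top\approx I$, so that $\Sigma_{sr}\approx J_sJ_r^\top$. You can repair your argument in either of two ways.
\begin{itemize}
\item Assume $\Sigma_H=\sigma^2 I$. This gives the identity $\|W_sW_r^\top\|_F^2=\sigma^{-4}\|\Cov(Z_s,Z_r)\|_F^2$.
\item Write $\Sigma_H=\sigma^2I+E$ and use $\|W_sW_r^\top\|_F^2\le 2\sigma^{-4}\bigl(\|\Cov(Z_s,Z_r)\|_F^2+\|W_sEW_r^\top\|_F^2\bigr)$. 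The second term is bounded via the head Lipschitz constants and is charged to $\delta$.
\end{itemize}
With either repair, $\delta=0$ survives only in the isotropic, exactly linear case.

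Keep in mind that since $\cos^2\le 1$, the proposition as literally stated already holds with $c=0$ and $\delta=1$. Its real content lies in which error terms can be made small, and this step is precisely where that content lives.
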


\textit{Proof.} See Appendix~\ref{app:proof-prop-grad-interference}.

\paragraph{Remark 4.}
\textbf{(Structural Interference Mitigation)} Gradient orthogonality emerges as a structural by-product of disentanglement, resolving task conflicts at their geometric source rather than through post-hoc gradient surgery.

\section{Method}
\label{sec:method}

Section~\ref{sec:theory} highlights that robustness under style shifts cannot be obtained by coordinating task gradients alone when the shared features remain entangled. 
To address this, we propose \emph{CORE-MTL}, a representation-centric framework designed to structurally disentangle invariant semantics from nuisance variations. As illustrated in Figure~\ref{fig:framework}, our method is composed of a generic meta-framework that enforces causal separation, and a grounding mechanism that anchors these factors via reconstruction.

\subsection{The CORE-MTL Meta-Framework}
\label{subsec:meta_framework}

Our meta-framework establishes the necessary structural conditions for disentanglement. It consists of three components: a dual-stream encoder, a structural independence constraint, and a counterfactual invariance mechanism.

\paragraph{Dual-Stream Encoder.}
We encode an input $x\in\mathcal{X}$ with a shared encoder $\Phi_\theta$ and explicitly factorize its output into two streams $(\hat{Z}_s, \hat{Z}_r) = \Phi_\theta(x)$, where $\hat{Z}_s$ is designated to capture shared task-relevant semantics (e.g., geometry, object identity) and $\hat{Z}_r$ absorbs complementary residual variation (e.g., texture, lighting, background). Crucially, task predictions are restricted to depend only on the semantic stream, $\hat{y}_t = f_{\phi_t}(\hat{Z}_s)$ for $t=1,\dots,K$. This structural constraint isolates task predictors from the residual stream, inherently preventing the model from exploiting nuisance variations as shortcuts.

\paragraph{Structural Independence via CKA.}
To encourage the semantic and residual representations to capture mutually exclusive information, we enforce statistical orthogonality. We use the linear Centered Kernel Alignment (CKA)~\cite{DBLP:conf/icml/Kornblith0LH19} as our independence regularizer and define the independence loss as:
\begin{equation}
\label{eq:lindep}
\mathcal{L}_{\mathrm{CKA}} = \text{CKA}(\mathbf{Z}_s, \mathbf{Z}_r),
\end{equation}
where $\mathbf{Z}_s, \mathbf{Z}_r$ are the feature matrices of the current mini-batch. This objective minimizes the linear dependence between the two streams. Crucially, under our linear–Gaussian assumption, reducing this statistical dependence suppresses the Jacobian cross-terms, offering a tractable proxy for controlling the geometric leakage coefficient $\lambda_{\text{leak}}$ appearing in Theorem~\ref{thm:ood-disentangled} (see Appendix~\ref{app:proof-prop-grad-interference} for a formal connection).

\paragraph{Counterfactual Feature Augmentation.}
To encourage the semantic representation $\hat{Z}_s$ to be invariant to nuisance factors, we introduce a \textbf{C}ounter\textbf{f}actual \textbf{A}ugmentation (CFA) mechanism based on counterfactual-style recombination.
We instantiate an auxiliary decoder $\hat{x} = \mathcal{D}(\hat{Z}_s, \hat{Z}_r)$ to reconstruct the input.

During training, we synthesize counterfactual instances by intervening on the residual stream: we replace the original residual features with a random nuisance vector $\tilde{Z}_r$ drawn from the empirical residual distribution, generating a counterfactual image $\tilde{x} = \mathcal{D}(\hat{Z}_s, \tilde{Z}_r)$.

Crucially, since $\tilde{x}$ retains the original semantic content $\hat{Z}_s$ but possesses a different nuisance context, a robust encoder must still yield correct task predictions from it.
We enforce this by passing the counterfactual image $\tilde{x}$ through the shared encoder and minimizing the consistency loss:
\begin{equation}
\label{eq:lcf}
\mathcal{L}_{\mathrm{CFA}}
=
\sum_{t=1}^K w_t \,
\mathcal{L}_t\big( f_{\phi_t}( [\Phi_\theta(\tilde{x})]_s ), y_t \big),
\end{equation}
where $[\Phi_\theta(\cdot)]_s$ denotes the semantic stream output of the shared encoder. In Figure~\ref{fig:framework}$, \tilde{z}_s=[\Phi_\theta(\tilde{x})]_s$ denotes the semantic representation re-encoded from the synthesized image $\tilde{x}$ .
This mechanism discourages the task predictors from relying on spurious correlations between task labels and nuisance variations. Since $\mathcal{D}$ only enforces structural constraints during training, it imposes no computational overhead at inference time.

\subsection{Grounding Disentanglement via Reconstruction}
\label{subsec:grounding}

While the meta-framework enforces statistical independence, independence alone is insufficient to determine a semantically meaningful factor-role assignment, leaving the factorization potentially ambiguous. Without additional constraints, the model might satisfy the orthogonality objective by splitting features arbitrarily.
To mitigate this ambiguity, we employ a generative reconstruction task as a proxy to ground the factorization. By minimizing a reconstruction objective ($\mathcal{L}_{\mathrm{rec}}$), we encourage the latent factors to retain the necessary information to reproduce the input, thereby providing a grounding signal for assigning physical or semantic realities via the decoder $\mathcal{D}$. We instantiate the decoder $\mathcal{D}$ differently depending on the availability of strong inductive biases in the target tasks.

\subsubsection{Hard Grounding via Explicit Physical Decomposition}
\label{subsubsec:hard_grounding}

In the context of geometric scene understanding, we exploit the physical laws of light transport as a structural prior to encourage a grounded semantic--residual decomposition.

We formalize the grounding decoder $\mathcal{D}$ as a Physics-Based Inverse Rendering Decoder (architectural details in Appendix~\ref{app:method_arch}). 
Specifically, unlike a generic decoder over concatenated latents, $D_{\rm phy}$ uses role-specific heads: the semantic stream $\hat{Z}_s$ predicts geometric structure such as surface normals, while the residual stream $\hat{Z}_r$ predicts photometric factors such as albedo and illumination. 
These factors are composed through a fixed physical shading model, imposing an asymmetric geometry--photometry role assignment during reconstruction.
The reconstruction is then synthesized via a physical shading model:
\begin{equation}
\label{eq:physics_decomp}
    \hat{x} 
    = \mathcal{D}_{\mathrm{phy}}(\hat{Z}_s, \hat{Z}_r) 
    \approx \underbrace{\mathcal{A}(\hat{Z}_r)}_{\text{Albedo}} \odot \underbrace{\mathcal{S}(\mathcal{N}(\hat{Z}_s), \mathbf{L}(\hat{Z}_r))}_{\text{Shading}},
\end{equation}
where $\mathcal{A}, \mathcal{N}, \mathbf{L}$ denote albedo, normals, and lighting, respectively, $\mathcal{S}$ is the shading function, and $\odot$ is the element-wise product. To align the latent factors with these physical quantities, we minimize a composite reconstruction objective:
\begin{equation}
\label{eq:l_strong_rec}
\mathcal{L}_{\mathrm{rec}} = \|x - \hat{x}\|_1 + \lambda_{\text{lpips}}\text{LPIPS}(x, \hat{x}).
\end{equation}
where LPIPS refers to the Learned Perceptual Image Patch Similarity metric~\cite{DBLP:conf/cvpr/ZhangIESW18}. This hard grounding physically biases the geometry-focused representation away from texture-related nuisances to benefit downstream tasks.

\subsubsection{Soft Grounding via Implicit Architectural Constraints}
\label{subsubsec:soft_grounding}
For tasks where explicit physical equations are unavailable or overly complex, we demonstrate that CORE-MTL remains effective using soft grounding.

In this setting, we instantiate $\mathcal{D}$ as a Generic Convolutional Decoder ($\mathcal{D}_{\mathrm{gen}}$) without explicit physical rendering equations. The grounding objective simplifies to a standard image reconstruction loss:
\begin{equation}
\label{eq:l_weak_rec}
\mathcal{L}_{\mathrm{rec}} = \|x - \mathcal{D}_{\mathrm{gen}}(\hat{Z}_s, \hat{Z}_r)\|_1.
\end{equation}
Here, a functional separation is encouraged from the architectural bottleneck and statistical independence. Since task heads only access $\hat{Z}_s$, discriminative features concentrate there, while the independence constraint ($\mathcal{L}_{\mathrm{CKA}}$) forces remaining variations required for reconstruction into $\hat{Z}_r$. This mechanism functionally mirrors the explicit physical decomposition: while hard grounding uses explicit physical priors to provide stronger factor-role anchoring, soft grounding achieves a similar structural orthogonality through statistical pressure.

\subsection{Optimization Objectives}
\label{subsec:optimization}

The total objective function integrates the task supervision with the framework's regularization terms:
\begin{equation}
\label{eq:overall_loss}
\mathcal{L}_{\mathrm{total}}
=
\sum_{t=1}^K w_t \mathcal{L}_t
+
\lambda_{\mathrm{CKA}}\,\mathcal{L}_{\mathrm{CKA}}
+
\lambda_{\mathrm{CFA}}\,\mathcal{L}_{\mathrm{CFA}}
+
\lambda_{\mathrm{rec}}\,\mathcal{L}_{\mathrm{rec}},
\end{equation}
where $\mathcal{L}_{\mathrm{rec}}$ switches between Eq.~\eqref{eq:l_strong_rec} and Eq.~\eqref{eq:l_weak_rec} depending on the instantiation.
During the counterfactual pass (Eq.~\eqref{eq:lcf}), we freeze Batch Normalization statistics to strictly evaluate robustness against synthesized style shifts.
\section{Experiments}
\label{sec:experiments}
\begin{table*}[t]
    \centering
    \footnotesize
    \setlength{\tabcolsep}{2.5pt} 
    \caption{\textbf{In-Distribution performance.} Left: NYUv2 (3 tasks). Right: Cityscapes (2 tasks). $\uparrow$ / $\downarrow$ indicate higher / lower is better.}
    \label{tab:merged_id}
    \resizebox{\textwidth}{!}{
    \begin{tabular}{lcccccccccccccc}
        \toprule
        \multirow{3}{*}{Method} &
        \multicolumn{9}{c}{\textbf{NYUv2} (3 Tasks)} &
        \multicolumn{4}{c}{\textbf{Cityscapes} (2 Tasks)} \\
        \cmidrule(lr){2-10} \cmidrule(lr){11-14}
        & \multicolumn{2}{c}{Segmentation} & \multicolumn{2}{c}{Depth} & \multicolumn{5}{c}{Surface Normal} & \multicolumn{2}{c}{Segmentation} & \multicolumn{2}{c}{Depth} \\
        \cmidrule(lr){2-3} \cmidrule(lr){4-5} \cmidrule(lr){6-10} \cmidrule(lr){11-12} \cmidrule(lr){13-14}
        & mIoU$\uparrow$ & Pix Acc$\uparrow$ & Abs Err$\downarrow$ & Rel Err$\downarrow$ & Mean$\downarrow$ & Med$\downarrow$ & 11.25$^\circ\uparrow$ & 22.5$^\circ\uparrow$ & 30$^\circ\uparrow$ 
        & mIoU$\uparrow$ & Pix Acc$\uparrow$ & Abs Err$\downarrow$ & Rel Err$\downarrow$ \\
        \midrule
        Single Task &
        0.5192 & 0.7396 & 0.5260 & 0.2076 & 24.2676 & 18.6778 & 0.3071 & 0.5800 & 0.7048 &
        0.6869 & 0.9144 & 0.0129 & 47.9603 \\
        Equal Weighting &
        0.5316 & 0.7508 & 0.3911 & 0.1633 & 24.2909 & 17.7635 & 0.3393 & 0.5927 & 0.7051 &
        0.6962 & 0.9192 & 0.0128 & 44.0340 \\
        PCGrad &
        0.5222 & 0.7433 & 0.3916 & 0.1609 & 24.3858 & 17.7999 & 0.3398 & 0.5915 & 0.7039 &
        0.6998 & 0.9187 & 0.0128 & 44.5574 \\
        GradNorm &
        0.5264 & 0.7436 & 0.3896 & 0.1648 & 24.3864 & 17.6391 & 0.3425 & 0.5944 & 0.7055 &
        0.7015 & 0.9197 & 0.0125 & 44.5520 \\
        STCH &
        0.5377 & 0.7483 & 0.3917 & 0.1569 & 23.2045 & 16.5197 & 0.3610 & 0.6203 & 0.7286 &
        0.6952 & 0.9180 & 0.0123 & 42.8384 \\
        MTAN &
        0.5401 & 0.7542 & 0.3822 & 0.1610 & 24.0165 & 17.3356 & 0.3462 & 0.6023 & 0.7130 &
        0.7023 & 0.9206 & 0.0126 & 45.5663 \\
        MOML &
        0.5303 & 0.7496 & 0.3952 & 0.1612 & 24.0679 & 17.4130 & 0.3469 & 0.5987 & 0.7093 &
        0.6876 & 0.9146 & 0.0145 & 52.5924 \\
        RLW &
        0.5293 & 0.7510 & 0.3883 & 0.1601 & 24.0588 & 17.2890 & 0.3465 & 0.6033 & 0.7134 &
        0.6995 & 0.9185 & 0.0131 & 43.6991 \\
        ExcessMTL &
        0.4846 & 0.7180 & 0.3877 & 0.1624 & 26.1446 & 19.8459 & 0.3043 & 0.5499 & 0.6676 &
        0.6969 & 0.9195 & 0.0128 & 43.3338 \\
        FairGrad &
        0.5291 & 0.7427 & 0.3944 & 0.1585 & 23.0717 & \textbf{16.3569} & \textbf{0.3657} & \textbf{0.6239} & 0.7312 &
        0.6986 & 0.9194 & \textbf{0.0121} & 43.7426 \\
        RepMTL &
        0.5492 & 0.7598 & 0.3727 & 0.1503 & 24.5348 & 19.6139 & 0.3018 & 0.5576 & 0.6847 &
        0.7079 & 0.9184 & 0.0180 & 44.3230 \\
        \midrule
        \textbf{\methodname{} (Ours)} &
        \textbf{0.5693} & \textbf{0.7759} & \textbf{0.3544} & \textbf{0.1466} & \textbf{22.4927} & 16.8337 & 0.3501 & 0.6196 & \textbf{0.7360} &
        \textbf{0.7229} & \textbf{0.9245} & 0.0123 & \textbf{19.6088} \\
        \bottomrule
    \end{tabular}
    }
\end{table*}

\begin{table*}[t]
    \centering
    \footnotesize
    \setlength{\tabcolsep}{2.5pt}
    \caption{\textbf{Out-of-distribution generalization.}
Left: Sim-to-real GTA5$\to$Cityscapes with source, target, and gap $\Delta$ (performance drop or error gap).
Right: Cityscapes-C robustness averaged over four corruptions (per-corruption results in Appendix~\ref{app:cityc_per_corruption}).}
    \label{tab:ood_results}
    \resizebox{\textwidth}{!}{%
    \begin{tabular}{lcccccccccccc c cccc}
        \toprule
        \multirow{3}{*}{Method} &
        \multicolumn{12}{c}{\textbf{GTA5 $\to$ Cityscapes} (Sim-to-Real)} &
        &
        \multicolumn{4}{c}{\textbf{Cityscapes-C}} \\

        \cmidrule(lr){2-13} \cmidrule(lr){15-18}

        & \multicolumn{3}{c}{mIoU$\uparrow$}
        & \multicolumn{3}{c}{Pixel Acc$\uparrow$}
        & \multicolumn{3}{c}{Abs Err$\downarrow$}
        & \multicolumn{3}{c}{Rel Err$\downarrow$}
        &
        & mIoU$\uparrow$ & Pixel Acc$\uparrow$ & Abs Err$\downarrow$ & Rel Err$\downarrow$ \\

        \cmidrule(lr){2-4}
        \cmidrule(lr){5-7}
        \cmidrule(lr){8-10}
        \cmidrule(lr){11-13}
        \cmidrule(lr){15-18}

        & Source & Target & $\Delta$$\downarrow$
        & Source & Target & $\Delta$$\downarrow$
        & Source & Target & $\Delta$$\downarrow$
        & Source & Target & $\Delta$$\downarrow$
        &
        &  &  &  &  \\
        \midrule

        Equal Weighting
        & 0.6611 & 0.5407 & 0.1204
        & 0.9347 & 0.8244 & 0.1103
        & 0.0173 & 0.1347 & 0.1174
        & 0.2675 & 303.07 & 302.80
        &
        & 0.5831 & 0.8521 & 0.0239 & 58.75 \\

        PCGrad
        & 0.6605 & 0.5047 & 0.1558
        & 0.9344 & 0.7943 & 0.1401
        & 0.0162 & 0.1435 & 0.1273
        & 0.2472 & 301.85 & 301.61
        &
        & 0.5851 & 0.8520 & 0.0230 & 59.20 \\

        GradNorm
        & 0.6590 & 0.5233 & 0.1357
        & 0.9352 & 0.8112 & 0.1240
        & 0.0161 & 0.1364 & 0.1203
        & 0.2062 & 294.48 & 294.27
        &
        & 0.5900 & 0.8522 & 0.0213 & 59.04 \\

        STCH
        & 0.6548 & 0.5177 & 0.1371
        & 0.9334 & 0.8240 & 0.1094
        & \textbf{0.0158} & 0.1712 & 0.1554
        & \textbf{0.1542} & 333.10 & 332.95
        &
        & 0.5853 & 0.8537 & 0.0192 & 54.57 \\

        MTAN
        & 0.6637 & 0.5147 & 0.1490
        & 0.9366 & 0.8016 & 0.1350
        & 0.0163 & 0.1185 & 0.1022
        & 0.1905 & 272.52 & 272.33
        &
        & 0.5887 & 0.8549 & 0.0218 & 56.55 \\

        MOML
        & 0.6548 & 0.5314 & 0.1234
        & 0.9417 & 0.8323 & 0.1094
        & 0.0173 & 0.1235 & 0.1062
        & 0.9818 & 287.25 & 286.27
        &
        & 0.5989 & 0.8633 & 0.0239 & 54.01 \\

        RLW
        & 0.6629 & 0.5203 & 0.1426
        & 0.9336 & 0.8119 & 0.1217
        & 0.0177 & 0.1406 & 0.1229
        & 0.1916 & 309.51 & 309.32
        &
        & 0.5916 & 0.8582 & 0.0212 & 56.82 \\

        ExcessMTL
        & 0.6587 & 0.5430 & 0.1157
        & 0.9361 & 0.8382 & \textbf{0.0979}
        & 0.0167 & 0.1586 & 0.1419
        & 0.1895 & 321.17 & 320.98
        &
        & 0.5814 & 0.8534 & 0.0233 & 60.28 \\

        FairGrad
        & \textbf{0.6665} & 0.4922 & 0.1743
        & 0.9343 & 0.7756 & 0.1587
        & 0.0266 & 0.1522 & 0.1256
        & 0.5264 & 320.55 & 320.02
        &
        & 0.5806 & 0.8490 & 0.0188 & 54.46 \\

        RepMTL
        & 0.6457 & 0.5343 & 0.1115
        & 0.9416 & 0.8120 & 0.1313
        & 0.0211 & 0.1128 & 0.0916
        & 0.8887 & 277.73 & 276.84
        &
        & 0.5916 & 0.8560 & \textbf{0.0181} & 51.50 \\

        \midrule
        \textbf{CORE-MTL (Ours)}
        & 0.6500 & \textbf{0.5435} & \textbf{0.1065}
        & \textbf{0.9422} & \textbf{0.8401} & 0.1021
        & 0.0220 & \textbf{0.0979} & \textbf{0.0759}
        & 0.1882 & \textbf{235.23} & \textbf{235.04}
        &
        & \textbf{0.6104} & \textbf{0.8670} & 0.0182 & \textbf{38.10} \\

        \bottomrule
    \end{tabular}%
    }
\end{table*}

\subsection{Experimental Setup}
\label{subsec:setup}
We evaluate on \textbf{NYUv2}~\cite{DBLP:conf/eccv/SilbermanHKF12} and \textbf{Cityscapes}~\cite{DBLP:conf/cvpr/CordtsORREBFRS16} (in-distribution, ID) implemented with LibMTL~\cite{DBLP:journals/jmlr/LinZ23}, \textbf{CelebA}~\cite{DBLP:conf/iccv/LiuLWT15} (scalability), \textbf{GTA5}~\cite{DBLP:conf/eccv/RichterVRK16}$\to$ \textbf{Cityscapes}, and \textbf{Cityscapes}$\to$ \textbf{Cityscapes-C}~\cite{DBLP:conf/iclr/HendrycksD19,DBLP:journals/corr/abs-1907-07484} (out-of-distribution, OOD). 
We compare CORE-MTL with representative baselines spanning architecture-centric sharing (MTAN~\cite{DBLP:conf/cvpr/LiuJD19}), weighting or balancing (EW~\cite{DBLP:journals/ml/Caruana97}, GradNorm~\cite{DBLP:conf/icml/ChenBLR18}, RLW~\cite{DBLP:journals/tmlr/LinYZT22}, ExcessMTL~\cite{DBLP:conf/icml/HeZZYXZC024}), gradient manipulation or fair allocation (PCGrad~\cite{DBLP:conf/nips/YuK0LHF20}, FairGrad~\cite{DBLP:conf/icml/BanJ24}), scalarization-based multi-objective optimization (STCH~\cite{DBLP:conf/icml/0001Z00W024}), meta-optimization (MOML~\cite{DBLP:conf/nips/YeLYGXZ21}), and representation-level methods (RepMTL~\cite{DBLP:conf/iccv/WangLX25}) using the same ResNet-50 backbone~\cite{DBLP:conf/cvpr/HeZRS16} unless otherwise specified.
Architectural details are provided in Appendix~\ref{app:method_arch}, while optimization and hyperparameters are deferred to Appendix~\ref{app:experimental_details}.

\subsection{In-Distribution Results}
\label{subsec:id_performance}


On NYUv2 and Cityscapes, we evaluate semantic segmentation, depth estimation, and surface normal prediction using the standard metrics in Table~\ref{tab:merged_id}. 
With the expanded comparison set, \methodname{} achieves the strongest overall trade-off across datasets, obtaining the best segmentation and depth performance on NYUv2 and the best segmentation and depth relative error on Cityscapes, while remaining competitive on the remaining metrics. 
These results suggest that encouraging semantic--residual factorization improves dense multi-task prediction by reducing representational entanglement rather than relying solely on gradient reweighting or projection.
\subsection{Out-of-Distribution Results}
\label{subsec:ood}

Table~\ref{tab:ood_results} evaluates robustness under sim-to-real transfer and common corruptions. 
On GTA5$\rightarrow$Cityscapes, \methodname{} achieves the best target-domain segmentation, pixel accuracy, depth absolute error, and depth relative error among the compared methods, while also yielding the smallest mIoU and depth error gaps in most cases. 
This indicates that the learned semantic stream is less sensitive to source-specific residual variation under domain shift. 
On Cityscapes-C, \methodname{} again achieves the best mIoU, pixel accuracy, and relative depth error, showing that the same factorization remains robust under corruption shifts.
Overall, these results are consistent with Theorem~\ref{thm:ood-disentangled}: reducing residual leakage in the prediction stream improves OOD robustness.
\subsection{Analysis of Structural Factorization}
\label{subsec:mechamnism_analysis}
\subsubsection{Analysis of Latent Factorization}
\textbf{Structural factorization yields stable semantics and absorbs residual variations.}
Qualitatively, Fig.~\ref{fig:feature_swapping} demonstrates that the counterfactual recombination $D(Z_s^{\text{CS}}, Z_r^{\text{GTA}})$ preserves scene layout while altering appearance statistics, supporting the intended factorization.
Quantitatively, the feature-swapping analysis (Fig.~\ref{fig:stability_analysis}, Table~\ref{tab:stability_ratio}) reveals a high robustness ratio ($\Delta Z_r/\Delta Z_s$ significantly larger than $1$), confirming that $Z_s$ remains stable under domain shifts while $Z_r$ absorbs variations.
Together, these results demonstrate that the learned structure prioritizes task-relevant semantics and buffers domain-specific nuisances, explaining the downstream gains in robustness.
\begin{figure}[htbp]
    \centering
    \includegraphics[width=\linewidth]{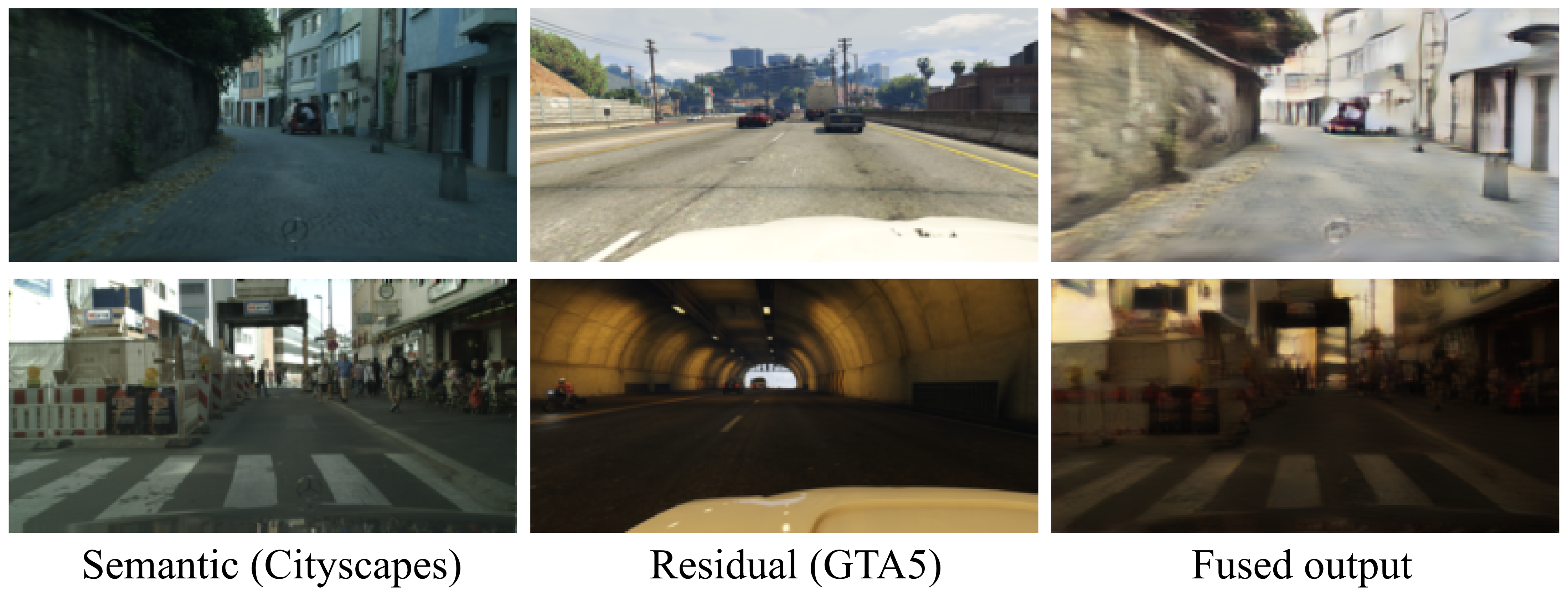}
    \vspace{-2.0em}
    \caption{\textbf{Cross-domain feature swapping.} Left: Cityscapes content (semantic stream). Middle: GTA5 reference (residual stream). Right: Synthesized output combining Cityscapes semantics with GTA5 residual appearance.}
    \vspace{-1.5em}
    \label{fig:feature_swapping}
\end{figure}
\paragraph{Colored-Cityscapes stress test.}
Since Assumption~\ref{assumption:linear_scm} is idealized while real scenes often exhibit strong
semantic--context correlations, we construct a shortcut-based Colored-Cityscapes
benchmark: class colors are fixed in the source domain but permuted in the
target domain. Appendix~\ref{app:colored_cityscapes} shows that CORE-MTL remains
the strongest under this shortcut shift, suggesting that its
semantic--residual factorization reduces residual-appearance leakage rather
than relying on literal semantic--residual independence.

\begin{figure}[t]
    \centering
    \begin{subfigure}{0.48\linewidth}
        \centering
        \includegraphics[width=\linewidth]{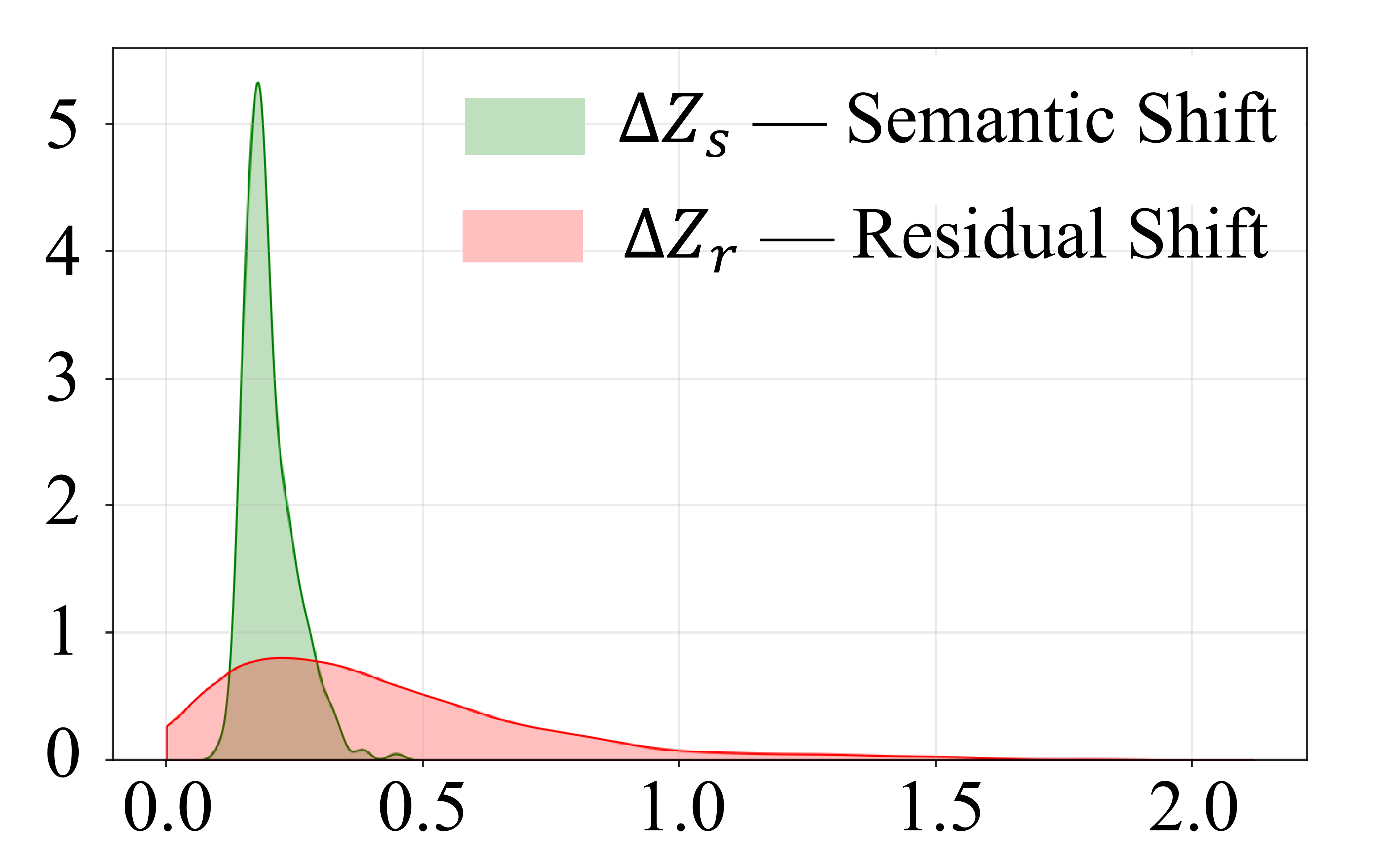}
        \caption{Intra-domain stability}
        \label{fig:intra_stability}
    \end{subfigure}
    \hfill
    \begin{subfigure}{0.48\linewidth}
        \centering
        \includegraphics[width=\linewidth]{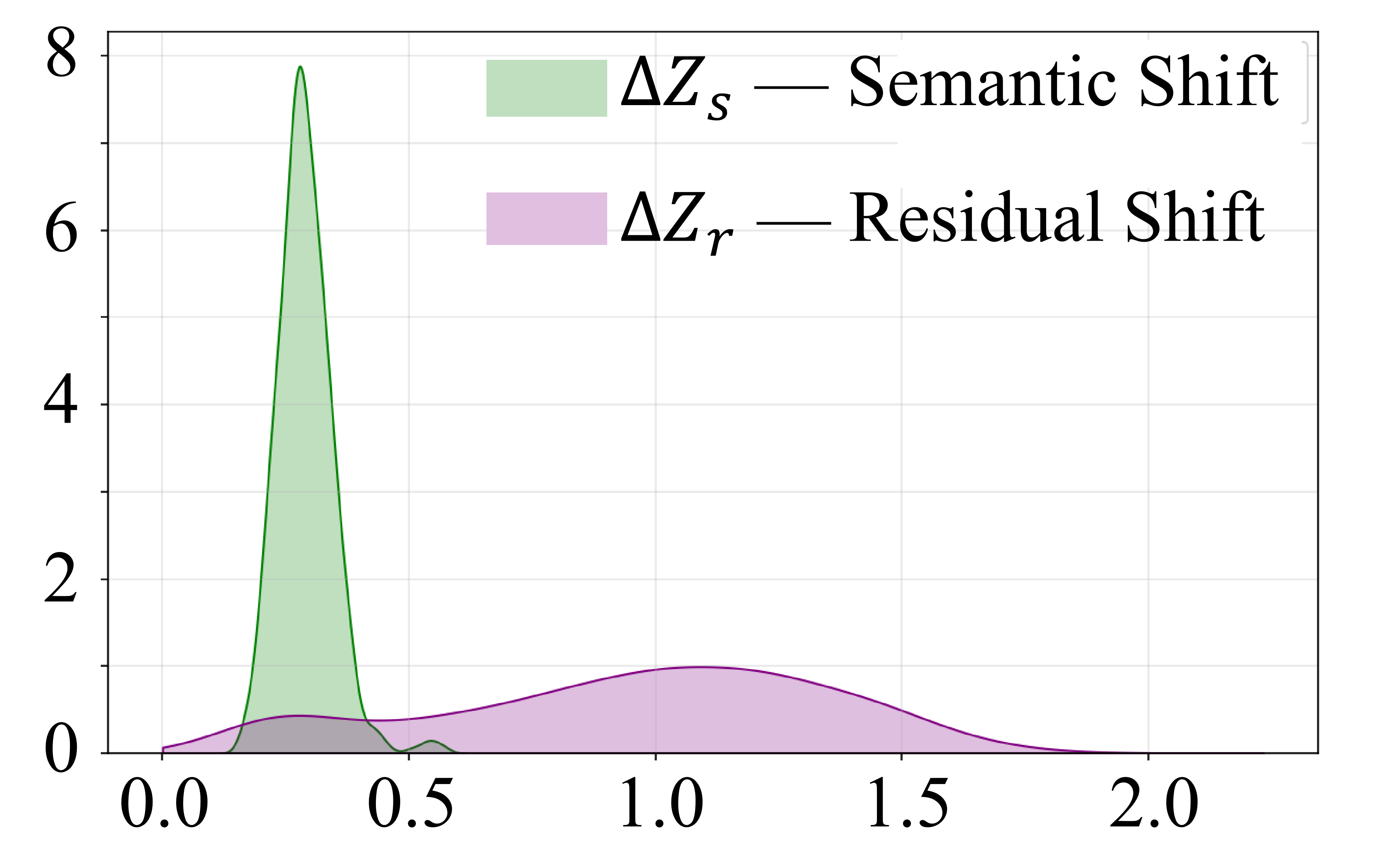}
        \caption{Cross-domain stability}
        \label{fig:cross_stability}
    \end{subfigure}
    \caption{Stability analysis of semantic ($Z_s$) and residual ($Z_r$) streams. We measure feature shifts ($\Delta Z$) under (a) intra-domain residual shuffling and (b) cross-domain intervention.}
    \vspace{-1.0em}
    \label{fig:stability_analysis}
\end{figure}

\subsubsection{Analysis of Gradient Orthogonality}
\begin{table}[!htbp] 
    \centering
    \footnotesize
    \setlength{\tabcolsep}{3pt} 
    \caption{Stability analysis under interventions. We report feature shifts ($\Delta Z$) and the Robustness Ratio ($\Delta Z_r/\Delta Z_s$). High ratio indicates effective disentanglement.}
    \label{tab:stability_ratio}
    \resizebox{0.9\columnwidth}{!}{%
    \begin{tabular}{lccc}
        \toprule
        Setting & $\Delta Z_s (\text{Sem}) \downarrow$ & $\Delta Z_r (\text{Res}) \uparrow$ & Ratio $\uparrow$ \\
        \midrule
        Intra-Domain & 0.1997 & 0.4158 & 2.08 \\
        Cross-Domain & 0.2889 & 0.9466 & \textbf{3.28} \\
        \bottomrule
    \end{tabular}%
    }
    \vspace{-2em}
\end{table}

\begin{figure}[!b]
    \centering
    \includegraphics[width=0.95\linewidth]{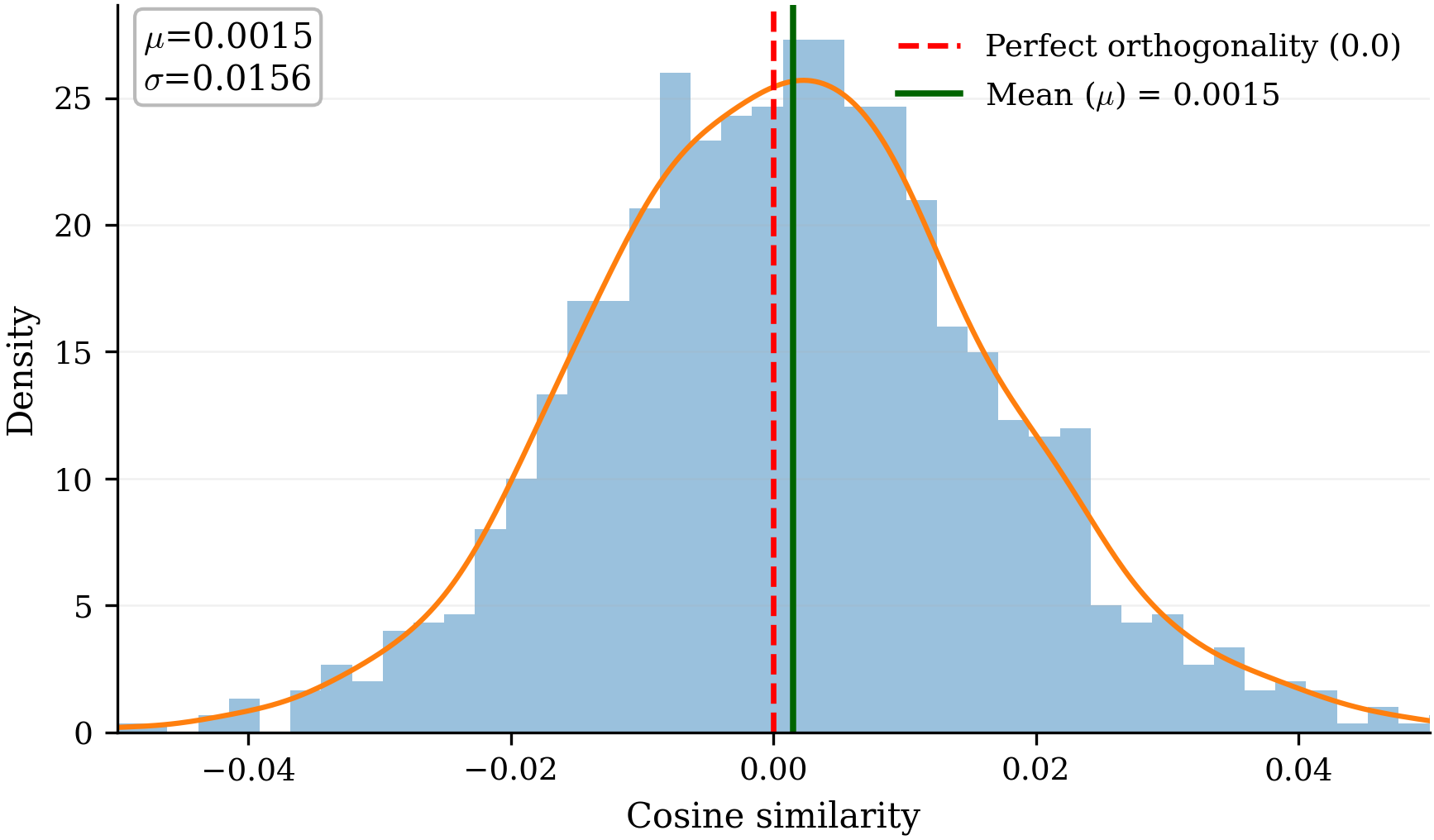}
    \vspace{-0.6em}
    \caption{Subspace orthogonality on CelebA. Distribution of cosine similarities between attribute task gradients and reconstruction gradients at the last shared encoder layer.}
    \vspace{-0.6em}
    \label{fig:orthogonality}
\end{figure}

\textbf{Structure induces intrinsic gradient orthogonality.}
Unlike explicit gradient-projection methods such as PCGrad, Fig.~\ref{fig:orthogonality} shows that CORE-MTL achieves near-zero cosine similarity between task and reconstruction gradients as a geometric by-product of semantic--residual factorization.  
Fig.~\ref{fig:grad_conflict} instead visualizes pairwise task--task gradient interactions, where the structured block patterns indicate more organized task relationships rather than uniformly vanishing gradients. 
Thus, resolving interference at the representation level reshapes gradient interactions and reduces the reliance on post-hoc gradient surgery.

\begin{figure}[t]
    \centering
    \includegraphics[width=0.95\linewidth]{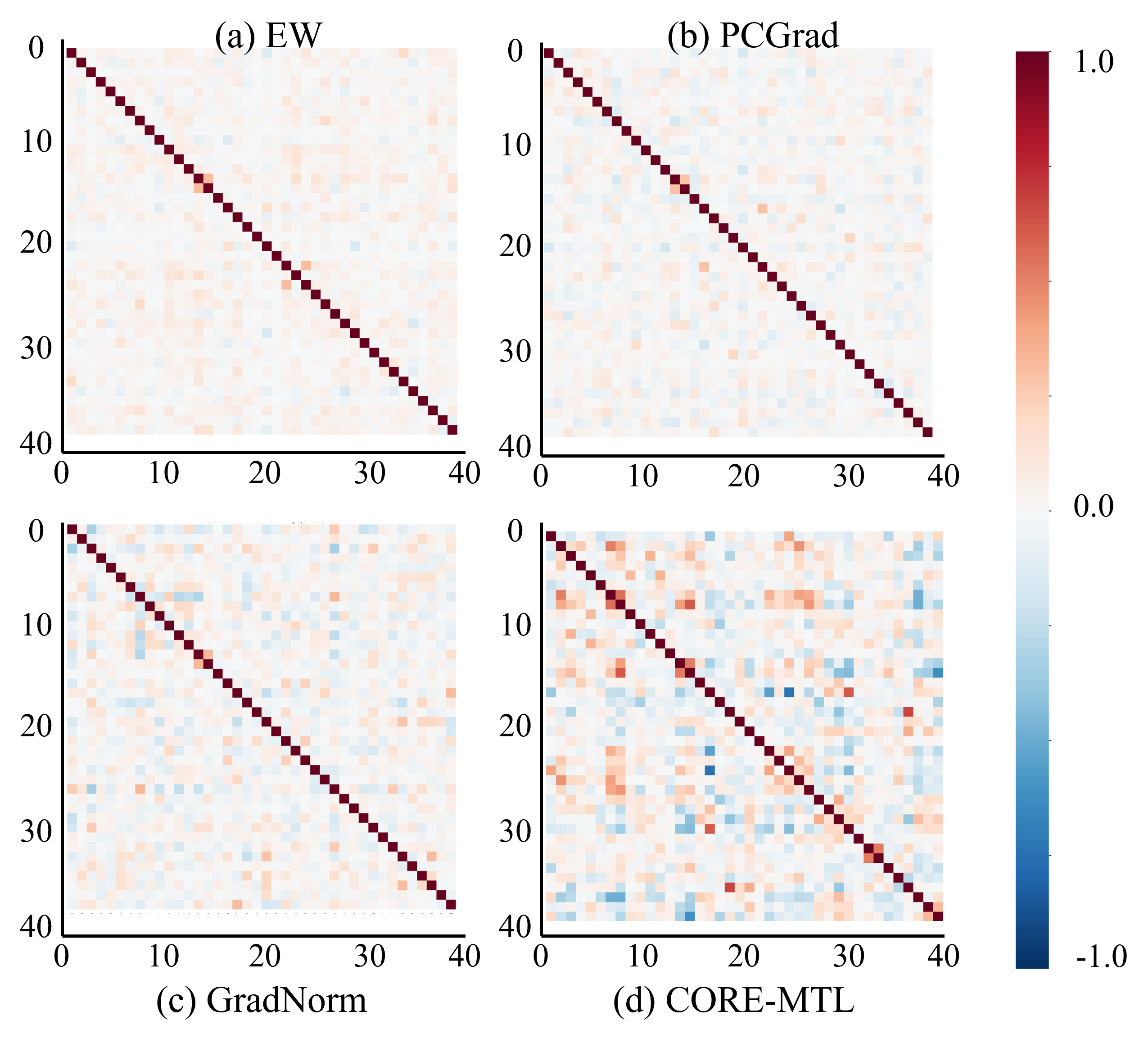}
    \vspace{-0.6em}
    \caption{\textbf{Gradient interaction heatmaps on CelebA.} Pairwise cosine similarities of task gradients at the last shared layer (see Appendix~\ref{app:celeba_attributes} for the attribute order).}
    \vspace{-1.5em}
    \label{fig:grad_conflict}
\end{figure}

\subsection{Scalability Results}
\label{subsec:scalability}
\begin{table}[t]
    \centering
    \footnotesize
    \setlength{\tabcolsep}{6pt}
    \caption{\textbf{Scalability analysis on CelebA.} Training time per epoch and average attribute accuracy vs. number of tasks ($K$).}
    \label{tab:celeba_scalability}
    \resizebox{\columnwidth}{!}{
    \begin{tabular}{cccc}
        \toprule
        \#Tasks $K$ &
        Method &
        Time / epoch (s)$\downarrow$ &
        Avg Accuracy (\%)$\uparrow$ \\
        \midrule
        10 & Equal Weighting (EW) &  86  & 88.52 \\
           & PCGrad               & 690  & 88.32 \\
           & STCH                 &  89  & 88.39 \\
           & \textbf{\methodname{} (Ours)} & 297  & \textbf{89.39} \\
        \midrule
        20 & Equal Weighting (EW) &  91  & 91.55 \\
           & PCGrad               & 1396 & 91.23 \\
           & STCH                 &  88  & 90.81 \\
           & \textbf{\methodname{} (Ours)} & 297  & \textbf{91.80} \\
        \midrule
        30 & Equal Weighting (EW) &  92  & 91.40 \\
           & PCGrad               & 1950 & 91.41 \\
           & STCH                 &  91  & 90.80 \\
           & \textbf{\methodname{} (Ours)} & 297  & \textbf{91.76} \\
        \midrule
        40 & Equal Weighting (EW) &  94  & 91.52 \\
           & PCGrad               & 2806 & 91.34 \\
           & STCH                 &  97  & 91.01 \\
           & \textbf{\methodname{} (Ours)} & 300  & \textbf{91.77} \\
        \bottomrule
    \end{tabular}
    }
    \vspace{-1.5em}
\end{table}

\textbf{Structural orthogonality offers superior scalability.}
As shown in Table~\ref{tab:celeba_scalability}, \methodname{} maintains near-constant training cost as the number of tasks $K$ increases, avoiding the prohibitive linear scaling of gradient surgery (PCGrad reaches 2806 s/epoch at $K{=}40$).
Although the auxiliary decoder incurs a fixed overhead ($\approx 300$ s/epoch) compared to simple weighting ($\approx 90 $ s/epoch), this cost does not compound with the number of tasks.
Notably, \methodname{} yields the highest average accuracy across different numbers of tasks, demonstrating a superior trade-off that avoids the prohibitive scaling of gradient surgery while consistently outperforming naive baselines.

\subsection{Ablation Studies}
\subsubsection{Component Ablation}
\begin{table*}[t]
    \centering
    \footnotesize
    \setlength{\tabcolsep}{3.5pt} 
    \caption{\textbf{Component-wise ablation on NYUv2.}
 DS: Dual Stream; Grounding: Grounding with Reconstruction; CKA: Independence Loss; CFA: Counterfactual Augmentation.}
    \label{tab:nyu_ablation_components}
\resizebox{\textwidth}{!}{
    \begin{tabular}{c cccc cc cc cc ccc} 
        \toprule
        & \multicolumn{4}{c}{\textbf{Components}} & \multicolumn{2}{c}{\textbf{Segmentation}} & \multicolumn{2}{c}{\textbf{Depth}} & \multicolumn{5}{c}{\textbf{Surface Normal}} \\
        \cmidrule(lr){2-5} \cmidrule(lr){6-7} \cmidrule(lr){8-9} \cmidrule(lr){10-14}

        \textbf{Method} &
        Dual & Grounding & CKA & CFA & 
        mIoU$\uparrow$ & PixAcc$\uparrow$ &
        Abs$\downarrow$ & Rel$\downarrow$ &
        Mean$\downarrow$ & Med$\downarrow$ &
        $11.25^\circ\uparrow$ & $22.5^\circ\uparrow$ & $30^\circ\uparrow$ \\
        \midrule

        Vanilla MTL &
        & & & & 
        0.5249 & 0.7417 & 0.4418 & 0.1881 &
        25.6201 & 19.8490 & 
        0.2926 & 0.5521 & 0.6776 \\

        DS &
        $\checkmark$ & & & &
        0.5352 & 0.7527 & 0.3813 & 0.1622 &
        23.2973 & 16.9788 &
        \textbf{0.3517} & 0.6117 & 0.7230 \\

        DS+Grounding &
        $\checkmark$ & $\checkmark$ & & &
        0.5424 & 0.7546 & 0.3827 & 0.1611 &
        23.2189 & 16.9394 &
        \textbf{0.3517} & 0.6130 & 0.7267 \\

        DS+Grounding+CKA &
        $\checkmark$ & $\checkmark$ & $\checkmark$ & &
        0.5547 & 0.7630 & 0.3876 & 0.1636 &
        23.9179 & 18.5226 &
        0.3165 & 0.5814 & 0.7041 \\

        Full &
        $\checkmark$ & $\checkmark$ & $\checkmark$ & $\checkmark$ &
        \textbf{0.5693} & \textbf{0.7759} &
        \textbf{0.3544} & \textbf{0.1466} &
        \textbf{22.4927} & \textbf{16.8337} &
        0.3501 & \textbf{0.6196} & \textbf{0.7360} \\
        \bottomrule
    \end{tabular}
    }
\end{table*}

Table~\ref{tab:nyu_ablation_components} shows that introducing the Dual-Stream (DS) backbone already improves over Vanilla MTL, confirming the benefit of separating semantics from noise. Adding the auxiliary reconstruction task yields further gains by providing an additional grounding signal that stabilizes the dual-stream decomposition and discourages shortcut solutions. The full model (DS+Grounding+CKA+CFA) achieves the strongest overall performance, indicating that CKA and CFA are complementary: CKA encourages cross-stream independence (Theorem~\ref{thm:ood-disentangled}), while CFA encourages prediction consistency under counterfactual-style residual substitution and reduces reliance on spurious correlations.

\subsubsection{Grounding Decoder Design}

We further examine whether the effectiveness of CORE-MTL relies on the explicit physics-based decoder. 
To this end, we replace the hard physical grounding decoder with a generic convolutional decoder while keeping the semantic--residual factorization and training objectives unchanged. 
As shown in Table~\ref{tab:ablation_decoder}, soft grounding already achieves competitive performance, indicating that CORE-MTL does not solely rely on hand-crafted physical rendering equations. 
Meanwhile, hard grounding obtains slightly stronger overall performance, suggesting that physical priors provide a useful inductive bias for dense scene understanding.

\begin{table*}[t]
\centering
\footnotesize
\setlength{\tabcolsep}{5pt}
\caption{\textbf{Ablation study on grounding decoder design on NYUv2.}
Soft grounding uses a generic convolutional decoder, while hard grounding uses the physics-based grounding decoder.}
\label{tab:ablation_decoder}
\begin{tabular}{lccccccccc}
\toprule
Setting &
\multicolumn{2}{c}{Segmentation} &
\multicolumn{2}{c}{Depth} &
\multicolumn{5}{c}{Surface Normal} \\
\cmidrule(lr){2-3} \cmidrule(lr){4-5} \cmidrule(lr){6-10}
&
mIoU$\uparrow$ &
Pix Acc$\uparrow$ &
Abs Err$\downarrow$ &
Rel Err$\downarrow$ &
Mean$\downarrow$ &
Med.$\downarrow$ &
11.25$^\circ\uparrow$ &
22.5$^\circ\uparrow$ &
30$^\circ\uparrow$ \\
\midrule
Soft Grounding &
0.5652 & 0.7715 &
0.3555 & \textbf{0.1427} &
23.0044 & 17.3100 &
0.3415 & 0.6068 & 0.7238 \\
Hard Grounding &
\textbf{0.5693} & \textbf{0.7759} &
\textbf{0.3544} & 0.1466 &
\textbf{22.4927} & \textbf{16.8337} &
\textbf{0.3501} & \textbf{0.6196} & \textbf{0.7360} \\
\bottomrule
\end{tabular}
\end{table*}

\subsubsection{Sensitivity to Independence Regularization}
\textbf{Sensitivity analysis reveals the independence-capacity trade-off.}
We further investigate the impact of the independence constraint weight $\lambda_{\mathrm{CKA}}$ and the counterfactual augmentation weight $\lambda_{\mathrm{CFA}}$.
Detailed analyses for the independence regularization are provided in Appendix~\ref{app:ablation_cka}, while the sensitivity to counterfactual perturbation magnitude is analyzed in Appendix~\ref{app:ablation_cfa}.
In summary, we observe a clear ``sweet spot'' for $\lambda_{\mathrm{CKA}}$ (e.g., $1.0$ on NYUv2).
Overly weak regularization allows leakage, while excessive regularization constrains the semantic stream's capacity.
Similarly, the CFA ablation in Appendix~\ref{app:ablation_cfa} confirms that moderate counterfactual perturbations effectively regularize the model against OOD shifts, whereas excessive perturbation noise can destabilize the learning of the semantic manifold.

\section{Related Work}

\noindent\textbf{Optimization-centric multi-task optimization.} 
Standard approaches focus on weighting or balancing losses (e.g., GradNorm~\cite{DBLP:conf/icml/ChenBLR18}, uncertainty~\cite{DBLP:conf/cvpr/KendallGC18}, RLW~\cite{DBLP:journals/tmlr/LinYZT22}, ExcessMTL~\cite{DBLP:conf/icml/HeZZYXZC024}) or pursuing multi-objective optimization (MGDA~\cite{DBLP:conf/nips/SenerK18}, MOML~\cite{DBLP:conf/nips/YeLYGXZ21}, STCH~\cite{DBLP:conf/icml/0001Z00W024}). 
To mitigate conflicting gradients, ``surgery'' or allocation-based methods modify updates (e.g., PCGrad~\cite{DBLP:conf/nips/YuK0LHF20}, CAGrad~\cite{DBLP:conf/nips/LiuLJSL21}, Nash-MTL~\cite{DBLP:conf/icml/NavonSAMKCF22}, FairGrad~\cite{DBLP:conf/icml/BanJ24}), while recent works refine conflict geometry via rotation~\cite{DBLP:conf/iclr/JavaloyV22} or conic constraints~\cite{DBLP:journals/corr/abs-2502-00217}. 
Strategies like accumulation-based stabilization~\cite{DBLP:journals/corr/abs-2509-07252}, Dual-Balancing~\cite{lin2025dual}, AutoScale~\cite{DBLP:journals/corr/abs-2508-13979}, and MLB~\cite{DBLP:conf/bmvc/KontrasCBV24} further stabilize training, yet typically address symptoms rather than the root cause of representation entanglement~\cite{DBLP:conf/iclr/ShiLZC023}.

\par
\noindent\textbf{Architecture-centric parameter sharing.} 
Architectural methods reduce interference by modulating sharing. 
Approaches like MTAN~\cite{DBLP:conf/cvpr/LiuJD19} and affinity-based designs~\cite{DBLP:conf/wacv/SinodinosA25} encourage selective feature reuse. 
Mixture-of-experts models (e.g., MMoE~\cite{DBLP:conf/kdd/MaZYCHC18}, PLE~\cite{DBLP:conf/recsys/TangLZG20}) decouple tasks via gated experts, with recent variants extending to large-scale backbones~\cite{DBLP:conf/icml/KongMZ000T25, DBLP:journals/corr/abs-2508-01261}. 
However, routing decisions in these models remain primarily loss-driven rather than causal, leaving them susceptible to spurious correlations~\cite{DBLP:conf/eccv/MaheronnaghshA24}.

\par
\noindent\textbf{Causal and structured representation learning.} 
Robust generalization requires separating stable signals from nuisance factors. 
While subspace methods like DSN~\cite{bousmalis2016domain}, invariant frameworks (e.g., IRM~\cite{DBLP:journals/corr/abs-1907-02893}, MRI~\cite{DBLP:conf/nips/HuhB22}), and representation-level methods such as RepMTL~\cite{DBLP:conf/iccv/WangLX25} improve domain separation, invariance, or task saliency, we repurpose disentanglement as a structural route toward gradient orthogonality (Proposition~\ref{prop:grad-interference}), tackling a key optimization source of negative transfer.
Similarly, unlike generative approaches utilizing feature-statistics mixing (MixStyle~\cite{DBLP:conf/iclr/ZhouY0X21}) for synthesis, we integrate counterfactual mechanisms as a regularizer to discourage reliance on spurious shortcuts~\cite{DBLP:journals/corr/abs-2510-24044}.
Crucially, distinct from causal frameworks that rely on modular routing to select invariant subnetworks~\cite{DBLP:conf/nips/Hu0YYHSC22}, CORE-MTL encourages internal semantic--residual factorization via physical or generic grounding. This provides factor-role anchoring for the semantic--residual split and structurally induces gradient orthogonality as a geometric byproduct, mitigating entanglement-induced transfer conflicts~\cite{DBLP:conf/icml/LachapelleDMMBL23,DBLP:conf/iclr/Hendawy0D24,tian2025multi}.


\section{Conclusion and Limitations}
\paragraph{Conclusion.}
We showed that entangled shared representations, rather than the gradient balancing scheme itself, are a key source of task interference. Targeting this issue, \methodname{} shifts the paradigm from post-hoc gradient surgery to intrinsic representation disentanglement. By imposing a causally motivated semantic--residual structural bias, our framework resolves interference at its source and achieves superior OOD robustness. More broadly, our findings advocate a shift from purely optimization-centric task coordination toward structure-aware representation design for robust multi-task learning.

\textbf{Limitations.} Our theoretical analysis relies on a stylized linear-Gaussian latent SCM, and extending the guarantees to deep nonlinear representations remains future work. \methodname{} is instantiated mainly for dense scene understanding tasks; more heterogeneous or multimodal task sets may require more flexible grounding mechanisms.

\section*{Impact Statement}
CORE-MTL aims to improve out-of-distribution robustness in shared perception models by separating task-relevant semantic structure from nuisance variation. This may help mitigate dataset bias and distribution shift in applications such as scene understanding or assistive perception. However, as a general-purpose technique, it could also be used in sensitive settings such as surveillance or facial analysis, where fairness, privacy, and misuse concerns remain. Practitioners should conduct application-specific evaluations and safeguards before deployment in high-stakes settings.

\bibliography{example_paper}
\bibliographystyle{icml2026}

\clearpage
\onecolumn 
\appendix
\section{Proof of Theorem~\ref{thm:impossibility} (OOD lower bound for gradient balancing)}
\label{app:proof_impossibility}
\begin{proof}
We give a concrete construction under the Linear-Gaussian SCM (Assumption~\ref{assumption:linear_scm})
showing that if $\psi\neq 0$ (entanglement), then there exists a residual shift inducing a non-vanishing OOD gap.
The argument is by exhibiting one task/predictor for which any optimization-centric method in $\mathcal{G}$
can converge on source risk yet still suffer a target shift through the entangled coordinate.

\paragraph{Setup (a single linear task suffices).}
Let $Z_s\sim\Nc(0,1)$ and $Z_r\sim\Nc(0,\sigma_{r,S}^2)$ be independent.
Let the label be generated by
\begin{equation}
\label{eq:task_gen}
Y = w^\star Z_s + \varepsilon,\qquad \varepsilon\sim\Nc(0,\sigma^2),
\end{equation}
and let the (entangled) scalar representation used by the head be
\begin{equation}
\label{eq:entangled_scalar}
\hat Z = \cos\psi\, Z_s + \sin\psi\, Z_r.
\end{equation}
Consider a linear head $h(\hat Z)=\beta \hat Z$ trained by minimizing source squared loss.
This is a special case of our main setup with Lipschitz head; proving the lower bound here suffices
because Theorem~\ref{thm:impossibility} is existential over target shifts and constants.

\paragraph{Step 1: Source-optimal solution.}
On the source domain, the population optimal coefficient is
\begin{equation}
\label{eq:beta_star}
\beta^\star
=
\frac{\Cov_S(Y,\hat Z)}{\Var_S(\hat Z)}.
\end{equation}
Using \eqref{eq:task_gen}--\eqref{eq:entangled_scalar} and independence $Z_s\perp Z_r$,
\begin{align}
\Cov_S(Y,\hat Z)
&= \E\big[(w^\star Z_s)(\cos\psi Z_s+\sin\psi Z_r)\big]
= w^\star \cos\psi\,\E[Z_s^2]
= w^\star \cos\psi, \\
\Var_S(\hat Z)
&= \E\big[(\cos\psi Z_s+\sin\psi Z_r)^2\big]
= \cos^2\psi\,\Var(Z_s)+\sin^2\psi\,\Var_S(Z_r)
= \cos^2\psi + \sin^2\psi\,\sigma_{r,S}^2.
\end{align}
Hence
\begin{equation}
\label{eq:beta_star_closed}
\beta^\star
=
\frac{w^\star \cos\psi}{\cos^2\psi + \sin^2\psi\,\sigma_{r,S}^2}.
\end{equation}

\paragraph{Step 2: Target risk under a residual covariance shift.}
Now define a target domain by keeping $Z_s$ invariant but changing only the residual variance:
$Z_r\sim\Nc(0,\sigma_{r,T}^2)$ with $\sigma_{r,T}^2\neq \sigma_{r,S}^2$.
The target risk of the source-trained predictor is
\begin{equation}
\label{eq:target_risk_def}
\mathcal{E}_T(h)-\mathcal{E}_S(h)
=
\E_T\big[(Y-\beta^\star \hat Z)^2\big]-\E_S\big[(Y-\beta^\star \hat Z)^2\big].
\end{equation}
Expanding $Y-\beta^\star \hat Z$ with \eqref{eq:task_gen}--\eqref{eq:entangled_scalar} yields
\begin{equation}
Y-\beta^\star \hat Z
=
\big(w^\star-\beta^\star\cos\psi\big)Z_s
-\beta^\star\sin\psi\,Z_r
+\varepsilon.
\end{equation}
Because $Z_s\perp Z_r\perp\varepsilon$, cross terms vanish in expectation, and the only term
that changes between source and target is the variance contribution from $Z_r$:
\begin{align}
\E_T[(Y-\beta^\star \hat Z)^2]-\E_S[(Y-\beta^\star \hat Z)^2]
&=
(\beta^\star)^2\sin^2\psi\cdot\big(\Var_T(Z_r)-\Var_S(Z_r)\big) \\
&=
(\beta^\star)^2\sin^2\psi\cdot\big(\sigma_{r,T}^2-\sigma_{r,S}^2\big).
\label{eq:gap_scalar}
\end{align}

\paragraph{Step 3: Lower bound in the form of Theorem~\ref{thm:impossibility}.}
If $\psi\neq 0$, choose a target shift with $\sigma_{r,T}^2>\sigma_{r,S}^2$, then \eqref{eq:gap_scalar} is strictly positive.
Moreover, substituting \eqref{eq:beta_star_closed} gives
\begin{equation}
\label{eq:gap_scalar_bound}
\mathcal{E}_T(h)-\mathcal{E}_S(h)
=
\underbrace{\Big(\frac{(w^\star)^2\cos^2\psi}{(\cos^2\psi+\sin^2\psi\,\sigma_{r,S}^2)^2}\Big)}_{\triangleq\,c_0>0}
\sin^2\psi\cdot(\sigma_{r,T}^2-\sigma_{r,S}^2).
\end{equation}
This matches the claimed $\sin^2\psi$ scaling. For the vector case in Assumption~\ref{assumption:linear_scm},
take a diagonal residual covariance shift so that
$\|\Sigma_r^T-\Sigma_r^S\|_F$ is proportional to the variance change along one coordinate,
and absorb the proportionality into $c$.
This construction establishes an OOD error floor in the simplest linear task setting. Since the constructed linear predictor $h(\hat{Z})=\beta^* \hat{Z}$ is $L$-Lipschitz with $L=|\beta^*|$, this counterexample suffices to prove that entanglement alone induces unavoidable residual-driven errors for the class of Lipschitz functions, limiting any optimization-centric algorithm in $\mathcal{G}$.
\end{proof}
\section{A concrete independence proxy (Linear CKA) and its link to entanglement}
\label{app:cka_proxy}

In the main text, $\Indep(\hat Z_s,\hat Z_r)$ is left as a generic dependence measure.
In our implementation we use Linear CKA, and in this appendix we make explicit
how it connects to the entanglement angle in Assumption~\ref{assumption:linear_scm}.

\paragraph{Linear CKA (batch form).}
Given centered batch feature matrices $U \in \R^{n \times d_u}$ and $V \in \R^{n \times d_v}$,
the (linear-kernel) CKA is
\begin{equation}
\label{eq:cka_def}
\mathrm{CKA}(U,V)
=
\frac{\|U^\top V\|_F^2}{\|U^\top U\|_F\,\|V^\top V\|_F}.
\end{equation}
When both are batch-standardized so that $\frac{1}{n}U^\top U \approx I$ and $\frac{1}{n}V^\top V \approx I$,
Linear CKA is proportional to the squared Frobenius norm of the empirical cross-correlation matrix.

\begin{lemma}[CKA as an entanglement proxy]
\label{lem:cka_entanglement_proxy}
Assume batch-standardized features $\Cov(\hat Z_s)=I$ and $\Cov(\hat Z_r)=I$ (empirically, after centering/normalization).
Then $\mathrm{CKA}(\hat Z_s,\hat Z_r)=\|\Corr(\hat Z_s,\hat Z_r)\|_F^2$ up to a constant factor.
\end{lemma}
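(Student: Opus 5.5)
The plan is to substitute the standardization assumption directly into the batch definition \eqref{eq:cka_def} and read off the cross-correlation. Write $U$ and $V$ for the centered batch matrices of $\hat Z_s$ and $\hat Z_r$, of sizes $n\times d_s$ and $n\times d_r$. Since the features are centered, the empirical covariance is $\Cov(\hat Z_s)=\tfrac1n U^\top U$, and the assumption $\Cov(\hat Z_s)=I$ gives $U^\top U = nI_{d_s}$. Likewise $V^\top V = nI_{d_r}$. Hence the denominator of \eqref{eq:cka_def} is
\begin{equation*}
\|U^\top U\|_F\,\|V^\top V\|_F = n\sqrt{d_s}\cdot n\sqrt{d_r} = n^2\sqrt{d_s d_r}.
\end{equation*}

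Next I rewrite the numerator through the correlation matrix. With unit variances on every coordinate, the empirical correlation equals the empirical cross-covariance, so $\Corr(\hat Z_s,\hat Z_r)=\tfrac1n U^\top V$. Therefore $\|U^\top V\|_F^2 = n^2\|\Corr(\hat Z_s,\hat Z_r)\|_F^2$. Combining the two expressions gives
\begin{equation*}
\mathrm{CKA}(\hat Z_s,\hat Z_r)=\frac{1}{\sqrt{d_s d_r}}\,\|\Corr(\hat Z_s,\hat Z_r)\|_F^2 .
\end{equation*}
This is the claimed identity, and the constant factor is $(d_s d_r)^{-1/2}$, which depends only on the stream dimensions.

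The main point needing care is the gap between the exact and the approximate statements. The lemma assumes exact identity covariance. If one only has $\tfrac1n U^\top U\approx I$, as in the text preceding the lemma, I would add a perturbation remark. If $\tfrac1n U^\top U = I+E_U$ and $\tfrac1n V^\top V = I+E_V$ with $\|E_U\|_F,\|E_V\|_F\le\epsilon$, then $\|I+E_U\|_F\in[\sqrt{d_s}-\epsilon,\sqrt{d_s}+\epsilon]$, and similarly for $V$. The identity then holds up to a multiplicative factor $1+O(\epsilon/\min(\sqrt{d_s},\sqrt{d_r}))$.

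I would also note the consequence that makes this an \emph{entanglement proxy}. Under Assumption~\ref{assumption:linear_scm}, suppose $\hat Z_s,\hat Z_r$ are blocks of $R_\psi[Z_s;Z_r]$ with whitened latents. In the scalar case the cross-correlation is $\sin\psi\cos\psi$ up to sign. Then CKA is proportional to $\sin^2\psi\cos^2\psi$, which vanishes at $\psi=0$.
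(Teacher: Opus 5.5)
Your derivation of the identity is correct and follows the same route as the paper. The paper's proof only notes that under standardization the denominator $\|U^\top U\|_F\|V^\top V\|_F$ is constant; you also compute the constant, $(d_sd_r)^{-1/2}$.

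Two of your supplementary remarks are wrong, although the lemma does not need either of them.

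\emph{The entanglement remark.} It fails under its own hypothesis. With whitened latents, $[Z_s;Z_r]\sim\Nc(0,I)$, hence $R_\psi[Z_s;Z_r]\sim\Nc(0,R_\psi R_\psi^\top)=\Nc(0,I)$. The two blocks are therefore uncorrelated, indeed independent, for every $\psi$, and CKA vanishes identically. For scalar latents with variances $\sigma_s^2,\sigma_r^2$, the block cross-covariance is $\pm\sin\psi\cos\psi\,(\sigma_s^2-\sigma_r^2)$. So CKA can see the rotation only through the anisotropy $\sigma_s^2\neq\sigma_r^2$; this is the usual rotational non-identifiability of isotropic Gaussians. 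Moreover, once $\hat Z_s,\hat Z_r$ are standardized, the resulting CKA carries the $\psi$-dependent normalizer $(\cos^2\psi\,\sigma_s^2+\sin^2\psi\,\sigma_r^2)(\sin^2\psi\,\sigma_s^2+\cos^2\psi\,\sigma_r^2)$ in its denominator. It is therefore not simply proportional to $\sin^2\psi\cos^2\psi$. The lemma alone does not make CKA a proxy for $\psi$.

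\emph{The perturbation remark.} You track only the denominator. When $\tfrac1n U^\top U=I+E_U$, the coordinates no longer have unit variance, so $\Corr=D_U^{-1/2}\bigl(\tfrac1n U^\top V\bigr)D_V^{-1/2}$, where $D_U,D_V$ are the diagonal parts of $I+E_U$ and $I+E_V$. Converting the numerator then costs a factor in $[(1-\epsilon)^2,(1+\epsilon)^2]$. An error of order $\epsilon$ is attained, e.g.\ a factor $1+\epsilon$ when all of the correlation sits on a coordinate of $\hat Z_s$ whose variance is $1+\epsilon$. The overall error is therefore $1+O(\epsilon)$, not $1+O(\epsilon/\min(\sqrt{d_s},\sqrt{d_r}))$.
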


\begin{proof}
Under standardization, $\| \hat Z_s^\top \hat Z_s \|_F$ and $\| \hat Z_r^\top \hat Z_r \|_F$ are constants,
so \eqref{eq:cka_def} reduces to $\|\hat Z_s^\top \hat Z_r\|_F^2$ up to scaling, i.e., squared cross-correlation.
\end{proof}

\paragraph{A convenient choice of Indep.}
To align with Theorem~\ref{thm:ood-disentangled}, it is often convenient to take
\begin{equation}
\label{eq:indep_choice}
\Indep(\hat Z_s,\hat Z_r)
\;\;\triangleq\;\;
\sqrt{\mathrm{CKA}(\hat Z_s,\hat Z_r)}
\;\approx\;
\|\Corr(\hat Z_s,\hat Z_r)\|_F,
\end{equation}
so that the shift attenuation term is linear in the ``leakage magnitude''.

\section{Proof of Theorem \ref{thm:ood-disentangled}}
\label{app:proof-ood-disentangled} 

\begin{proof}
We derive a capacity-aware OOD bound where the effect of residual shift is attenuated by the leakage coefficient $\lambda_{\mathrm{leak}}$.

\textbf{Step 1: Standard Domain Adaptation Bound.}
Let $\mathcal{H}$ be the hypothesis class of models predicting only from $\hat Z_s$. Standard learning-theoretic results (e.g., Wasserstein-based bounds for Lipschitz losses) bound the target risk by the source risk plus the distribution shift of the features:
\begin{equation}
\mathcal{E}_T(h) - \mathcal{E}_S(h) \le C_{\mathrm{cap}} + \alpha_0\,W_1\!\big(P_S(\hat Z_s),P_T(\hat Z_s)\big),
\label{eq:wa-bound}
\end{equation}
where $C_{\mathrm{cap}}$ handles capacity terms and $\alpha_0$ depends on the task head's Lipschitz constant.

\textbf{Step 2: Bounding Semantic Shift via Leakage.}
Under Assumption~2.2, domain shift affects only $Z_r$. Thus, the joint distributions are $P_D(Z_s,Z_r) = P(Z_s)\otimes P_D(Z_r)$ for $D\in\{S,T\}$.
Let $\pi_r$ be an optimal coupling between $P_S(Z_r)$ and $P_T(Z_r)$. We construct a joint coupling $\Pi$ by sampling $Z_s \sim P(Z_s)$ and $(Z_r^S, Z_r^T) \sim \pi_r$ independently. 
Using the definition of $\lambda_{\mathrm{leak}}$ as the partial Lipschitz constant:
\[ \|\hat Z_s(Z_s, Z_r^S) - \hat Z_s(Z_s, Z_r^T)\| \le \lambda_{\mathrm{leak}} \|Z_r^S - Z_r^T\|. \]
Taking expectations under the coupling $\Pi$ yields:
\begin{align}
W_1(P_S(\hat Z_s), P_T(\hat Z_s)) &\le \mathbb{E}_\Pi [\|\hat Z_s^S - \hat Z_s^T\|] \nonumber \\
&\le \lambda_{\mathrm{leak}} \mathbb{E}_{\pi_r}[\|Z_r^S - Z_r^T\|] \nonumber \\
&= \lambda_{\mathrm{leak}} W_1(P_S(Z_r), P_T(Z_r)).
\label{eq:leakage_bound}
\end{align}

\textbf{Conclusion.} Substituting \eqref{eq:leakage_bound} into \eqref{eq:wa-bound} yields the claim.
\end{proof}

\section{Proof of Proposition~\ref{prop:grad-interference}}
\label{app:proof-prop-grad-interference}

\begin{proof}
To establish the bound, we first define the Jacobians of the semantic and residual mappings with respect to the shared backbone $H$. Let $Z_s = \phi_s(H)$ and $Z_r = \phi_r(H)$, and denote their Jacobians by:
\begin{equation}
J_s = \frac{\partial Z_s}{\partial H}, \qquad J_r = \frac{\partial Z_r}{\partial H}.
\end{equation}
By the chain rule, the gradients of the task and residual losses with respect to $H$ are:
\begin{equation}
g_{\mathrm{task}} = J_s^\top v_s, \qquad g_{\mathrm{res}} = J_r^\top v_r,
\end{equation}
where $v_s = \nabla_{Z_s} \mathcal{L}_{\mathrm{task}}$ and $v_r = \nabla_{Z_r} \mathcal{L}_{\mathrm{res}}$ are the error signals backpropagated from the respective heads.

\textbf{Step 1: Bounding gradient cosine similarity.}
We analyze the squared cosine similarity: $\cos^2(g_{\mathrm{task}}, g_{\mathrm{res}}) = (\langle g_{\mathrm{task}}, g_{\mathrm{res}} \rangle)^2 / (\|g_{\mathrm{task}}\|^2 \|g_{\mathrm{res}}\|^2)$.
We explicitly expand the inner product using the Jacobian definitions:
\begin{equation}
\langle g_{\mathrm{task}}, g_{\mathrm{res}} \rangle 
\;=\; \langle J_s^\top v_s, J_r^\top v_r \rangle 
\;=\; v_s^\top (J_s J_r^\top) v_r.
\end{equation}
Applying the Cauchy-Schwarz inequality:
\begin{equation}
|v_s^\top (J_s J_r^\top) v_r| \;\le\; \|v_s\| \|v_r\| \|J_s J_r^\top\|_F.
\end{equation}
Under the assumption that head gradients $\|v_s\|$ and $\|v_r\|$ are bounded almost surely (e.g., by gradient clipping or Lipschitz continuity of the heads) and that backbone gradients are non-degenerate, there exist constants $C_1, \delta_1 \ge 0$ such that:
\begin{equation}
\cos^2(g_{\mathrm{task}}, g_{\mathrm{res}}) \le C_1 \|J_s J_r^\top\|_F^2 + \delta_1.
\end{equation}
Taking the expectation over the data distribution yields:
\begin{equation}
\label{eq:E-cos2}
\mathbb{E}[\cos^2(g_{\mathrm{task}}, g_{\mathrm{res}})] \le C_1 \mathbb{E}[\|J_s J_r^\top\|_F^2] + \delta_1.
\end{equation}

\textbf{Step 2: Linking Jacobian overlap to Independence.}
We relate the Jacobian cross-term to the feature independence. Consider the local linearization of the streams around the centered batch data $\tilde{H}$: $Z_s \approx J_s \tilde{H}$ and $Z_r \approx J_r \tilde{H}$. The empirical cross-covariance matrix $\Sigma_{sr}$ satisfies:
\begin{equation}
\Sigma_{sr} = \frac{1}{n} Z_s Z_r^\top \approx J_s \left(\frac{1}{n} \tilde{H} \tilde{H}^\top\right) J_r^\top \approx J_s J_r^\top,
\end{equation}
assuming the backbone features are approximately whitened (e.g., via LayerNorm). 
Recall that CKA~\cite{DBLP:conf/icml/Kornblith0LH19} approximates the squared Frobenius norm of the cross-covariance matrix. Thus, our independence measure implies:
\begin{equation}
\mathrm{CKA}(Z_s, Z_r) \propto \|\Sigma_{sr}\|_F^2 \approx \|J_s J_r^\top\|_F^2.
\end{equation}
Therefore, there exists a constant $C_2$ and residual $\delta_2$ (accounting for linearization error) such that:
\begin{equation}
\label{eq:J-Indep}
\mathbb{E}[\|J_s J_r^\top\|_F^2] \le C_2 \cdot \mathrm{CKA}(Z_s, Z_r) + \delta_2.
\end{equation}

\textbf{Step 3: Conclusion.}
Substituting Equation~\ref{eq:J-Indep} into Equation~\ref{eq:E-cos2} and merging constants ($c = C_1 C_2$, $\delta = C_1 \delta_2 + \delta_1$), we obtain the bound in Proposition~\ref{prop:grad-interference}:
\begin{equation}
\mathbb{E}[\cos^2(g_{\mathrm{task}}, g_{\mathrm{res}})] \le c \cdot \mathrm{CKA}(Z_s, Z_r) + \delta.
\end{equation}
This confirms that minimizing the dependence between streams structurally suppresses the interference between task-relevant and nuisance gradients.
\end{proof}

\section{Model architecture details}
\label{app:method_arch}

\paragraph{Backbone and task heads (dense prediction).}
On NYUv2, Cityscapes, GTA5, and Cityscapes-C, we use a dilated ResNet-50 encoder pretrained on ImageNet.
For all dense tasks (segmentation, depth, normals), we use DeepLabV3$+$-style task heads.
Given the semantic representation $Z_s$, we apply an ASPP module to capture multi-scale context, concatenate it with a low-level feature from the encoder (projected by a $1\times1$ convolution), and then apply $3\times3$ and $1\times1$ convolutions followed by bilinear upsampling to recover the input resolution.
The segmentation head outputs $C$-channel logits, the depth head outputs a single-channel regression map, and the normal head outputs 3 channels followed by $\ell_2$ normalization.
All baselines share the same backbone and task head architectures as our method.

\paragraph{Backbone and heads on CelebA.}
On CelebA, we also use a ResNet-50 backbone (with dilated convolutions disabled for classification) pretrained on ImageNet.
After global average pooling, we apply a shared fully connected layer with 256 hidden units, BatchNorm, ReLU, and dropout (0.5), followed by a linear layer that outputs logits for 40 attributes.
We train with the binary cross-entropy loss with logits.

\paragraph{Semantic and residual projectors.}
Our CORE-MTL model augments the encoder with two projectors:
a semantic projector $P_s$ and a residual projector $P_r$.
On dense prediction benchmarks, both $P_s$ and $P_r$ are implemented as $1\times1$ convolutions that map the combined encoder features (e.g., 2048 channels) to a lower-dimensional latent space (typically 256--512 channels).
The semantic branch produces a spatial feature map $Z_s\in\mathbb{R}^{C_s\times H\times W}$ that preserves spatial resolution.
The residual branch produces a feature map $P_r(H)\in\mathbb{R}^{C_r\times H\times W}$, which is then passed through global average pooling to yield a spatially invariant residual vector
$z_r\in\mathbb{R}^{C_r}$.
On CelebA, we use lightweight Conv--BN--ReLU blocks for $P_s$ and $P_r$.

\paragraph{Decoder and counterfactual generator (training-time only).}
The decoder $D$ takes $(Z_s, z_r)$ as input and reconstructs the input image (and optionally geometric signals) to support reconstruction and counterfactual augmentation losses.
It is implemented as a stack of residual upsampling blocks without explicit skip connections from the encoder (i.e., not a full U-Net).
For Causal Feature Augmentation (CFA), we generate counterfactual residual vectors by shuffling $z_r$ within a mini-batch, synthesize counterfactual images via $D(Z_s, z_r[\pi])$, and re-encode them through the encoder and projectors before feeding them to the task heads.
The decoder and CFA pathway are used only at training time and are entirely removed at inference.

\section{Experimental Details}
\label{app:experimental_details}

\subsection{Datasets and preprocessing}
\label{app:datasets_preproc}

\paragraph{NYUv2.}
We follow the LibMTL and MTAN preprocessing.
The dataset is split into 795 training and 654 validation images.
We consider three tasks: 13-class semantic segmentation, depth estimation, and surface normal prediction.
Inputs are resized to $288\times384$.
For depth and normals, invalid pixels are masked out using a binary mask
$\mathbf{1}[\sum_c \text{GT}_c \neq 0]$ during both training and evaluation.
Depth evaluation uses mean absolute error (Abs Err) and relative error (Rel Err) without additional log transforms or clipping.
Normals are evaluated with mean and median angular error and the percentage of pixels within $11.25^\circ$, $22.5^\circ$, and $30^\circ$.

\paragraph{Cityscapes (ID).}
For in-distribution multi-task learning, we use the Cityscapes subset and preprocessing provided by LibMTL.
We adopt the standard 2975/500 train/val split and use two tasks:
7-class semantic segmentation and monocular depth estimation.
The preprocessed images and labels are stored as $128\times256$ numpy arrays.
Depth labels are obtained from the same preprocessed package (pseudo-depth derived from stereo disparity).
We evaluate segmentation with mIoU and pixel accuracy, and depth with Abs Err and Rel Err.

\subsection{Optimization and hyperparameters}
\label{app:optim_hparams}

\paragraph{Optimizer and learning-rate schedule.}
Across all benchmarks (NYUv2, Cityscapes, GTA5$\to$Cityscapes, Cityscapes-C, and CelebA), we use the AdamW optimizer with an initial learning rate of $2\times 10^{-4}$ and weight decay $10^{-4}$.
We adopt a cosine annealing learning-rate schedule with a minimum learning rate equal to $0.01$ times the initial value.
A warmup phase is used at the beginning of training: 5 warmup epochs for NYUv2, Cityscapes, Cityscapes-C, and CelebA, and 10 warmup epochs for GTA5$\to$Cityscapes.

\paragraph{Batch sizes, epochs, and hardware.}
We use the following batch sizes and training durations:
NYUv2: batch size 16, 100 epochs;
Cityscapes (ID): batch size 32, 100 epochs;
GTA5$\to$Cityscapes: batch size 32, 100 epochs;
Cityscapes-C: batch size 16, 100 epochs (using the model trained on clean Cityscapes);
CelebA: batch size 256, 50 epochs.
All experiments are implemented in PyTorch and run on NVIDIA A800 GPUs (80GB);
unless otherwise noted, each experiment uses a single GPU.
These configurations are chosen such that each experiment fits comfortably within one GPU.

\paragraph{Data augmentation.}
On NYUv2, we apply random horizontal flipping, random scale-and-crop with resize scales in $\{1.0, 1.2, 1.5\}$ to a fixed input size, and color jitter with brightness, contrast, and saturation set to 0.4 and hue 0.1.
On CelebA, we use random horizontal flipping and mild color jitter (brightness, contrast, and saturation 0.1, no hue change), followed by resizing to $128\times128$.
For GTA5, Cityscapes, and Cityscapes-C, we follow the LibMTL preprocessing for resolution and normalization and apply random horizontal flipping as the primary augmentation.
No additional photometric augmentation is used in these experiments.

\subsection{Training Algorithm}
\begin{algorithm}[tb]
\caption{CORE-MTL Training Procedure}
\label{alg:core_mtl}
\begin{algorithmic}[1]
\STATE {\bfseries Input:} dataset $\mathcal{D}$, batch size $B$, weights $\lambda_{\text{rec}}, \lambda_{\text{CKA}}, \lambda_{\text{CFA}}$, and $\lambda_{\text{lpips}}$, flag \textsc{GroundingType} (Hard/Soft)
\STATE {\bfseries Initialize:} encoder $E$, projectors $P_s,P_r$, task heads $\{f_t\}_{t=1}^K$, decoder $D$
\REPEAT
    \STATE {\bfseries 1) Encode \& Factorize}
    \STATE Sample batch $\{(x_i,\{y_{t,i}\}_{t=1}^K)\}_{i=1}^B \sim \mathcal{D}$
    \STATE $h_i \leftarrow E(x_i)$
    \STATE $z_{s,i}\leftarrow P_s(h_i)$
    \STATE $z_{r,i} \leftarrow \{P_r(h_i), \mathrm{GAP}(h_i)\}$ \COMMENT{Spatial \& global residual codes}

    \STATE {\bfseries 2) Task Prediction (Semantics Only)}
    \STATE $\hat{y}_{t,i}\leftarrow f_t(z_{s,i})$
    \STATE $L_{\text{task}} \leftarrow \sum_{t=1}^K w_t \cdot \frac{1}{B}\sum_{i=1}^B \mathcal{L}_t(\hat{y}_{t,i},y_{t,i})$

    \STATE {\bfseries 3) Grounding via Reconstruction}
    \IF{\textsc{HasPhysicsPrior}}
        \STATE $\hat{x}_i \leftarrow D_{\text{phy}}(z_{s,i}, z_{r,i})$ \COMMENT{Decodes Albedo \& Lighting from $z_r$}
        \STATE $L_{\text{rec}} \leftarrow \frac{1}{B}\sum_{i=1}^B (\|x_i-\hat{x}_i\|_1 + \lambda_{\text{lpips}}\mathrm{LPIPS}(x_i,\hat{x}_i))$
    \ELSE
        \STATE $\hat{x}_i \leftarrow D_{\text{gen}}(z_{s,i}, z_{r,i})$ \COMMENT{Generic convolutional decoding}
        \STATE $L_{\text{rec}} \leftarrow \frac{1}{B}\sum_{i=1}^B \|x_i-\hat{x}_i\|_1$
    \ENDIF

    \STATE {\bfseries 4) Structural Independence}
    \STATE $L_{\text{CKA}} \leftarrow \mathrm{LinearCKA}(\{z_{s,i}\}_{i=1}^B,\{z_{r,i}\}_{i=1}^B)$ \COMMENT{Minimize semantic-residual dependence}

    \STATE {\bfseries 5) Counterfactual Feature Augmentation (CFA)}
    \IF{\textsc{GroundingType} == Hard}  
        \STATE $\pi \leftarrow \mathrm{Shuffle}(1{:}B)$
        \STATE $x^{\text{CFA}}_i \leftarrow D_{\text{phy}}(z_{s,i}, z_{r,\pi(i)})$ \COMMENT{Synthesize via physical recombination}
        \STATE $h^{\text{CFA}}_i \leftarrow E(x^{\text{CFA}}_i)$;\ \ $z^{\text{CFA}}_{s,i}\leftarrow P_s(h^{\text{CFA}}_i)$
        \STATE $\hat{y}^{\text{CFA}}_{t,i}\leftarrow f_t(z^{\text{CFA}}_{s,i})$
        \STATE $L_{\text{CFA}} \leftarrow \sum_{t=1}^K w_t \cdot \frac{1}{B}\sum_{i=1}^B \mathcal{L}_t(\hat{y}^{\text{CFA}}_{t,i},y_{t,i})$
    \ELSE
        \STATE $L_{\text{CFA}} \leftarrow 0$
    \ENDIF

    \STATE {\bfseries 6) Optimization}
    \STATE $L_{\text{total}} \leftarrow L_{\text{task}} + \lambda_{\text{rec}}L_{\text{rec}} + \lambda_{\text{CKA}}L_{\text{CKA}} + \lambda_{\text{CFA}}L_{\text{CFA}}$
    \STATE Update $\theta\leftarrow \theta-\eta\nabla_{\theta}L_{\text{total}}$
\UNTIL{convergence}
\end{algorithmic}
\end{algorithm}

\subsection{Loss weights}
\label{app:loss_weights}

Our total loss is
\[
\mathcal{L}_{\text{total}}
= \mathcal{L}_{\text{task}}
+ \lambda_{\text{rec}} \mathcal{L}_{\text{rec}}
+ \lambda_{\text{CKA}} \mathcal{L}_{\text{CKA}}
+ \lambda_{\text{CFA}} \mathcal{L}_{\text{CFA}},
\]
where $\mathcal{L}_{\text{task}}$ aggregates the task-specific losses (segmentation, depth, normals or attribute classification),
$\mathcal{L}_{\text{rec}}$ includes geometric and appearance reconstruction terms (and auxiliary $\ell_1$/perceptual terms when applicable),
$\mathcal{L}_{\text{CKA}}$ is the independence regularizer between $Z_s$ and $Z_r$, and
$\mathcal{L}_{\text{CFA}}$ is the consistency loss under counterfactual feature augmentation.

Table~\ref{tab:lambda_details} summarizes the loss weights used in our experiments.
For NYUv2, $\lambda_{\text{rec}}$ is reported as $(\alpha_{\text{geom}}, \beta_{\text{app}}, \lambda_{\text{L1}})$;
for Cityscapes and GTA5$\to$Cityscapes it is $(\alpha_{\text{geom}}, \beta_{\text{app}}, \lambda_{\text{img}})$.

\begin{table}[t]
    \centering
    \footnotesize
    \caption{Loss weights used in our experiments.
    ``Rec.'' lists $(\alpha_{\text{geom}}, \beta_{\text{app}}, \cdot)$ as described in the text.}
    \label{tab:lambda_details}
    \resizebox{0.8\columnwidth}{!}{
    \begin{tabular}{lcccccc}
        \toprule
        Dataset &
        $\lambda_{\text{seg}}$ &
        $\lambda_{\text{depth}}$ &
        $\lambda_{\text{normal}}$ &
        Rec. &
        $\lambda_{\text{CKA}}$ &
        $\lambda_{\text{CFA}}$ \\
        \midrule
        NYUv2 &
        20.0 & 10.0 & 10.0 &
        $(2.0, 15.0, 2.0)$ &
        1.0 & 1.0 \\
        Cityscapes (ID) &
        10.0 & 20.0 & -- &
        $(2.0, 15.0, 5.0)$ &
        1.0 & 1.0 \\
        GTA5$\to$Cityscapes &
        30.0 & 25.0 & -- &
        $(5.0, 2.0, 5.0)$ &
        0.1 & 0.5 \\
        Cityscapes$\to$Cityscapes-C &
        10.0 & 20.0 & -- &
        $(2.0, 15.0, 5.0)$ &
        1.0 & 1.0 \\
        CelebA &
        -- & -- & -- &
        $(5.0, -, 1.0)$ &
        0.5 & 0.0 \\
        \bottomrule
    \end{tabular}
    }
\end{table}

\subsection{Hardware and software environment}
\label{app:hardware_software}

All experiments are implemented in PyTorch and run on NVIDIA A800 GPUs with 80GB of memory.
Although multiple GPUs are available, each experiment is conducted on a single GPU.
This configuration is sufficient to train our model and all baselines with the batch sizes and resolutions described above.

\section{Sensitivity to Independence Regularization ($\lambda_{\mathrm{CKA}}$)}
\label{app:ablation_cka}

In this section, we provide a detailed analysis of the trade-off between semantic independence and task performance by varying the independence loss weight $\lambda_{\mathrm{CKA}}$.

\paragraph{Training Dynamics.}
Figure~\ref{fig:cka_nyuv2_app} and Figure~\ref{fig:cka_source_target_app} illustrate the evolution of the independence metric (Linear CKA) during training on NYUv2 and the GTA5$\to$Cityscapes transfer setting, respectively.
As expected, higher $\lambda_{\mathrm{CKA}}$ values lead to lower CKA scores, indicating stronger orthogonality between the semantic and residual streams.

\paragraph{Impact on Performance.}
Table~\ref{tab:ablation_cka_nyuv2_app} and Table~\ref{tab:ablation_cka_app} present the downstream performance.
On NYUv2, we observe that $\lambda_{\mathrm{CKA}}=1.0$ yields the best balance, improving segmentation and depth estimation over the unregularized baseline ($\lambda=0$).
However, setting $\lambda_{\mathrm{CKA}}$ too high ($100.0$) degrades performance, particularly on surface normal prediction.
This suggests that strictly orthogonalizing the representations may discard some useful shared information or over-constrain the semantic stream when the task difficulty is high.

\begin{figure}[h]
    \centering
    \includegraphics[width=0.6\linewidth]{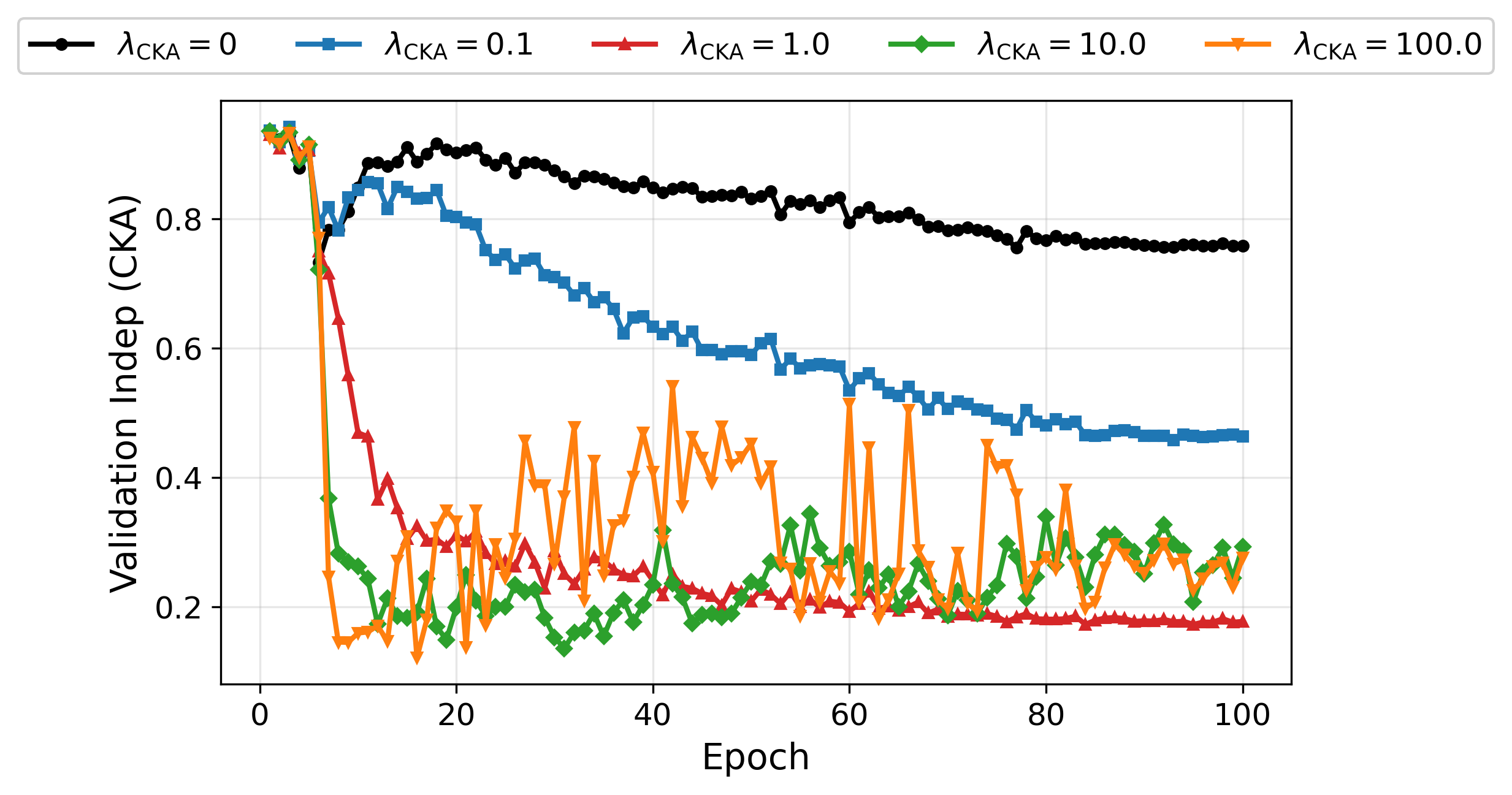}
    \caption{Independence(CKA) between $Z_s$ and $Z_r$ over epochs on NYUv2.}
    \label{fig:cka_nyuv2_app}
\end{figure}

\begin{table}[h]
    \centering
    \footnotesize
    \setlength{\tabcolsep}{4pt}
    \caption{Ablation on the CKA loss weight $\lambda_{\mathrm{CKA}}$ on NYUv2.}
    \label{tab:ablation_cka_nyuv2_app}
    \begin{tabular}{lccccccccc}
        \toprule
        $\lambda_{\mathrm{CKA}}$ &
        \multicolumn{2}{c}{Seg} &
        \multicolumn{2}{c}{Depth} &
        \multicolumn{5}{c}{Normal} \\
        \cmidrule(lr){2-3} \cmidrule(lr){4-5} \cmidrule(lr){6-10}
        & mIoU$\uparrow$ & Pix Acc$\uparrow$ &
          Abs Err$\downarrow$ & Rel Err$\downarrow$ &
          Mean Ang$\downarrow$ & Median$\downarrow$ &
          Acc 11$\uparrow$ & Acc 22$\uparrow$ & Acc 30$\uparrow$ \\
        \midrule
        0.0   & 0.5636 & 0.7719 & 0.3555 & 0.1433 & 22.9829 & 17.2257 & 0.3416 & 0.6090 & 0.7255 \\
        0.1   & 0.5676 & \textbf{0.7749} & 0.3570 & 0.1451 & 22.8554 & 17.0985 & 0.3441 & 0.6127 & 0.7291 \\
        1.0   & \textbf{0.5694} & 0.7735 & 0.3555 & 0.1445 & \textbf{22.7650} & \textbf{17.0039} & \textbf{0.3464} & \textbf{0.6141} & \textbf{0.7303} \\
        10.0  & 0.5647 & 0.7714 & \textbf{0.3509} & \textbf{0.1410} & 23.0706 & 17.4375 & 0.3390 & 0.6038 & 0.7220 \\
        100.0 & 0.5551 & 0.7656 & 0.3690 & 0.1484 & 23.9131 & 18.4297 & 0.3209 & 0.5821 & 0.7034 \\
        \bottomrule
    \end{tabular}
\end{table}

\begin{figure}[h]
    \centering
    \begin{subfigure}{0.48\linewidth}
        \centering
        \includegraphics[width=\linewidth]{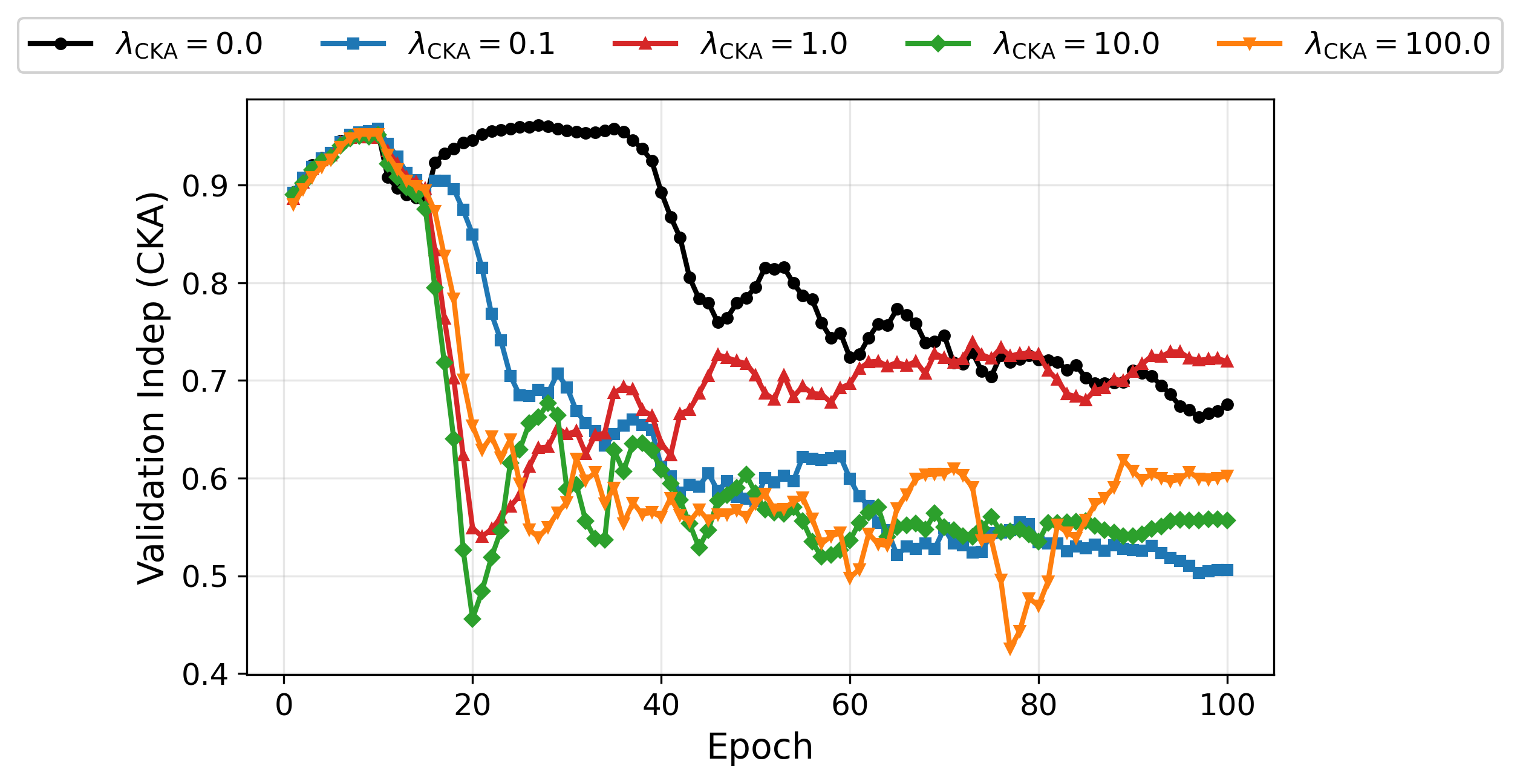}
        \caption{Src: GTA5}
    \end{subfigure}
    \hfill
    \begin{subfigure}{0.48\linewidth}
        \centering
        \includegraphics[width=\linewidth]{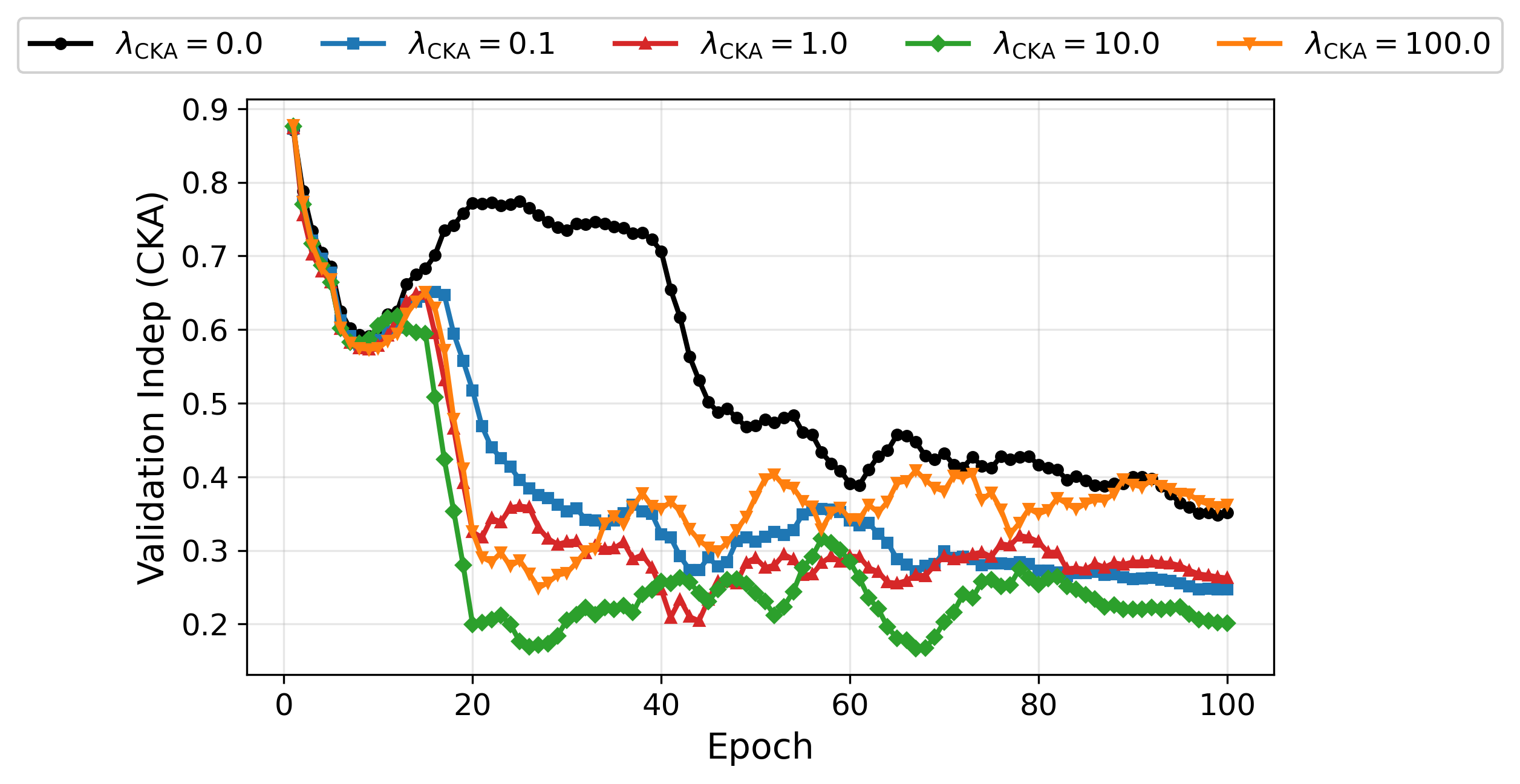}
        \caption{Tgt: Cityscapes}
    \end{subfigure}
    \caption{Validation Independence(CKA) between $Z_s$ and $Z_r$
    on the source (GTA5) and target (Cityscapes) domains.}
    \label{fig:cka_source_target_app}
\end{figure}

\begin{table}[h]
    \centering
    \footnotesize
    \setlength{\tabcolsep}{4pt}
    \caption{Ablation on the independence loss weight $\lambda_{\mathrm{CKA}}$ on GTA5$\to$Cityscapes.
    Src and Tgt denote GTA5 and Cityscapes, respectively.}
    \label{tab:ablation_cka_app}
    \begin{tabular}{lcccccccc}
        \toprule
        $\lambda_{\mathrm{CKA}}$ &
        \multicolumn{2}{c}{Seg mIoU} &
        \multicolumn{2}{c}{Seg Pix Acc} &
        \multicolumn{2}{c}{Depth Abs Err} &
        \multicolumn{2}{c}{Depth Rel Err} \\
        \cmidrule(lr){2-3} \cmidrule(lr){4-5} \cmidrule(lr){6-7} \cmidrule(lr){8-9}
        & Src$\uparrow$ & Tgt$\uparrow$ &
          Src$\uparrow$ & Tgt$\uparrow$ &
          Src$\downarrow$ & Tgt$\downarrow$ &
          Src$\downarrow$ & Tgt$\downarrow$ \\
        \midrule
        0.0   &
        0.6540 & \textbf{0.5795} &
        0.9439 & \textbf{0.8678} &
        0.0187 & 0.1064 &
        \textbf{0.5161} & \textbf{252.3461} \\
        0.1   &
        \textbf{0.6583} & 0.5731 &
        0.9439 & 0.8615 &
        0.0181 & 0.1022 &
        0.8236 & 252.6473 \\
        1.0   &
        0.6449 & 0.5497 &
        \textbf{0.9445} & 0.8393 &
        0.0222 & 0.1049 &
        0.7117 & 252.9314 \\
        10.0  &
        0.6573 & 0.5655 &
        0.9438 & 0.8516 &
        \textbf{0.0169} & \textbf{0.0870} &
        0.6980 & 252.8681 \\
        100.0 &
        0.6550 & 0.5600 &
        0.9429 & 0.8505 &
        \textbf{0.0169} & 0.1126 &
        0.6595 & 290.2770 \\
        \bottomrule
    \end{tabular}
\end{table}

\section{Sensitivity to Counterfactual Augmentation ($\lambda_{\mathrm{CFA}}$)}
\label{app:ablation_cfa}

Table~\ref{tab:ablation_cfa_app} demonstrates the effect of the counterfactual consistency loss weight $\lambda_{\mathrm{CFA}}$ in the OOD setting (GTA5$\to$Cityscapes).
By enforcing the model to output consistent predictions when the residual style codes are swapped, CFA actively immunizes the semantic stream against spurious correlations.

We observe that a moderate weight (e.g., $0.5$ or $1.0$) significantly improves target domain performance (Tgt mIoU), indicating that the model successfully learns to be invariant to residual style factors.
However, increasing $\lambda_{\mathrm{CFA}}$ to $5.0$ leads to a performance drop.
This is likely because the perturbation noise becomes too strong relative to the task loss, effectively disrupting the learning of the primary semantic task on the source domain.

\begin{table}[h]
    \centering
    \footnotesize
    \setlength{\tabcolsep}{4pt}
    \caption{Ablation on the counterfactual augmentation weight $\lambda_{\mathrm{CFA}}$
    on GTA5$\to$Cityscapes. Src and Tgt denote GTA5 and Cityscapes, respectively.}
    \label{tab:ablation_cfa_app}
    \begin{tabular}{lcccccccc}
        \toprule
        $\lambda_{\mathrm{CFA}}$ &
        \multicolumn{2}{c}{Seg mIoU} &
        \multicolumn{2}{c}{Seg Pix Acc} &
        \multicolumn{2}{c}{Depth Abs Err} &
        \multicolumn{2}{c}{Depth Rel Err} \\
        \cmidrule(lr){2-3} \cmidrule(lr){4-5} \cmidrule(lr){6-7} \cmidrule(lr){8-9}
        & Src$\uparrow$ & Tgt$\uparrow$ &
          Src$\uparrow$ & Tgt$\uparrow$ &
          Src$\downarrow$ & Tgt$\downarrow$ &
          Src$\downarrow$ & Tgt$\downarrow$ \\
        \midrule
        0   &
        0.6605 & 0.5303 &
        \textbf{0.9471} & 0.8267 &
        \textbf{0.0170} & 0.1207 &
        \textbf{0.3646} & 272.0598 \\
        0.1 &
        0.6615 & 0.5174 &
        0.9452 & 0.8399 &
        0.0186 & \textbf{0.0924} &
        0.4699 & \textbf{242.8646} \\
        0.5 &
        0.6466 & \textbf{0.5657} &
        0.9393 & \textbf{0.8558} &
        0.0171 & 0.1028 &
        0.6887 & 255.1533 \\
        1.0 &
        0.6537 & 0.5572 &
        0.9440 & 0.8532 &
        0.0179 & 0.1065 &
        0.7325 & 263.7881 \\
        5.0 &
        \textbf{0.6655} & 0.5206 &
        0.9460 & 0.8272 &
        0.0181 & 0.1128 &
        0.9077 & 265.3829 \\
        \bottomrule
    \end{tabular}
\end{table}

\section{Colored-Cityscapes Stress Test}
\label{app:colored_cityscapes}

\paragraph{Motivation.}
Assumption~\ref{assumption:linear_scm} uses an idealized latent SCM in which semantic factors and
residual factors are separated for analysis. In real scenes, however, semantic
content and contextual appearance are often strongly correlated. For example,
road, sky, vegetation, and building regions usually co-occur with characteristic
colors and textures. Such correlations can induce shortcut predictors that
perform well in the source distribution but fail when the appearance statistics
change. To examine whether CORE-MTL remains effective beyond this idealized
setting, we construct a Colored-Cityscapes stress test that deliberately creates
strong semantic--appearance dependence in the source distribution and breaks it
in the target distribution.

\paragraph{Dataset construction.}
We build the stress test on top of the Cityscapes multi-task setting used in the
main experiments. For the source distribution, each semantic class is assigned a
fixed artificial color, and the input image is recolored according to its
semantic mask. This creates a deterministic class--color association, making
color a strong shortcut cue for semantic prediction. For the target
distribution, we keep the same images, semantic labels, and depth labels, but
permute the class--color mapping. Therefore, the target distribution preserves
the underlying scene layout and task supervision while changing only the
appearance shortcut. This controlled construction directly tests whether a
method relies on residual appearance cues or learns a more stable
task-relevant representation.

\begin{table*}[t]
\centering
\caption{Colored-Cityscapes stress test. Class colors are fixed in the source
distribution and permuted in the target distribution. Higher is better for mIoU
and pixel accuracy, while lower is better for depth errors.}
\label{tab:colored_cityscapes}
\resizebox{\textwidth}{!}{
\begin{tabular}{lcccccccc}
\toprule
Method
& Source mIoU$\uparrow$
& Source Pix Acc$\uparrow$
& Source Abs Err$\downarrow$
& Source Rel Err$\downarrow$
& Target mIoU$\uparrow$
& Target Pix Acc$\uparrow$
& Target Abs Err$\downarrow$
& Target Rel Err$\downarrow$ \\
\midrule
EW
& 0.7811 & 0.9400 & 0.0123 & 42.7581
& 0.3356 & 0.4501 & 0.0319 & 47.0544 \\
PCGrad
& 0.7800 & 0.9396 & 0.0125 & 42.1546
& 0.3525 & 0.4282 & 0.0370 & 46.6852 \\
GradNorm
& 0.7795 & 0.9398 & 0.0121 & 43.0793
& 0.3545 & 0.4284 & 0.0409 & 46.6124 \\
STCH
& 0.7719 & 0.9375 & \textbf{0.0119} & 42.8746
& 0.3548 & 0.4457 & 0.0251 & 44.4525 \\
MTAN
& 0.7902 & 0.9421 & 0.0121 & 42.8665
& 0.3597 & 0.4204 & 0.0513 & 46.1531 \\
CORE-MTL (Ours)
& \textbf{0.8295} & \textbf{0.9492} & 0.0121 & \textbf{28.2662}
& \textbf{0.3738} & \textbf{0.4889} & \textbf{0.0249} & \textbf{32.1862} \\
\bottomrule
\end{tabular}
}
\end{table*}

\paragraph{Results.}
As shown in Table~\ref{tab:colored_cityscapes}, CORE-MTL achieves the best
target-domain performance across all four metrics. In particular, under the
permuted class--color mapping, CORE-MTL obtains the highest segmentation mIoU
and pixel accuracy, as well as the lowest depth absolute and relative errors.
These results show that CORE-MTL remains robust even when semantic content and
appearance context are deliberately made strongly correlated in the source
distribution, supporting that its semantic--residual factorization reduces
residual-appearance leakage into the prediction stream.

\section{Cityscapes$\to$Cityscapes-C per-corruption results}
\label{app:cityc_per_corruption}
In this section, we provide a detailed breakdown of the performance on individual corruption types from the Cityscapes-C benchmark. While the main text reports the averaged robustness metrics (see Table~\ref{tab:ood_results}), Tables~\ref{tab:cityc_contrast} through~\ref{tab:cityc_gaussian} present the specific results for Contrast, Defocus Blur, Fog, and Gaussian Noise corruptions, respectively.

Consistent with the aggregated results, \textbf{CORE-MTL} demonstrates superior robustness across these diverse domain shifts. Notably, in scenarios characterized by severe visibility reduction, such as Fog (Table~\ref{tab:cityc_fog}), our method significantly outperforms gradient-balancing baselines (e.g., surpassing GradNorm and PCGrad by a large margin in mIoU). Similarly, under high-frequency noise interference like Gaussian Noise (Table~\ref{tab:cityc_gaussian}), our approach maintains higher segmentation accuracy and substantially lower depth relative error. These consistent improvements reinforce our claim that structurally disentangling the invariant semantic stream ($Z_s$) effectively isolates task predictions from residual variations ($Z_r$).

\begin{table}[t]
    \centering
    \footnotesize
    \begin{minipage}{0.48\textwidth}
        \centering
        \caption{Cityscapes$\to$Cityscapes-C: Contrast corruption.}
        \label{tab:cityc_contrast}
        \resizebox{\textwidth}{!}{
        \begin{tabular}{lcccc}
            \toprule
            Method &
            Seg mIoU$\uparrow$ &
            Seg Pix Acc$\uparrow$ &
            Depth Abs Err$\downarrow$ &
            Depth Rel Err$\downarrow$ \\
            \midrule
            Equal Weighting &
            0.5300 & 0.8354 & 0.0267 & 74.2265 \\
            PCGrad &
            0.5282 & 0.8286 & 0.0281 & 75.4339 \\
            GradNorm &
            0.5364 & 0.8331 & 0.0240 & 70.8020 \\
            STCH &
            0.5398 & 0.8372 & 0.0224 & 65.9642 \\
            MTAN &
            0.5377 & 0.8337 & 0.0242 & 66.0131 \\
            MOML &
            \textbf{0.5568} & \textbf{0.8482} & 0.0234 & 61.1203 \\
            RLW &
            0.5396 & 0.8370 & 0.0241 & 73.1554 \\
            ExcessMTL &
            0.5140 & 0.8261 & 0.0280 & 76.8556 \\
            FairGrad &
            0.5271 & 0.8288 & \textbf{0.0206} & 62.1508 \\
            RepMTL &
            0.5434 & 0.8389 & 0.0208 & 60.3663 \\
            \textbf{CORE-MTL (Ours)} &
            0.5405 & 0.8325 & 0.0234 & \textbf{46.5571} \\
            \bottomrule
        \end{tabular}
        }
    \end{minipage}
    \hfill
    \begin{minipage}{0.48\textwidth}
        \centering
        \caption{Cityscapes$\to$Cityscapes-C: Defocus Blur corruption.}
        \label{tab:cityc_defocus}
        \resizebox{\textwidth}{!}{
        \begin{tabular}{lcccc}
            \toprule
            Method &
            Seg mIoU$\uparrow$ &
            Seg Pix Acc$\uparrow$ &
            Depth Abs Err$\downarrow$ &
            Depth Rel Err$\downarrow$ \\
            \midrule
            Equal Weighting &
            0.6876 & 0.9162 & 0.0133 & 44.4824 \\
            PCGrad &
            0.6903 & 0.9159 & 0.0133 & 45.3982 \\
            GradNorm &
            0.6934 & 0.9168 & 0.0129 & 45.4517 \\
            STCH &
            0.6877 & 0.9151 & 0.0124 & 43.4140 \\
            MTAN &
            0.6924 & 0.9169 & 0.0132 & 46.7692 \\
            MOML &
            0.6923 & 0.9166 & 0.0143 & 45.8430 \\
            RLW &
            0.6913 & 0.9156 & 0.0134 & 45.0699 \\
            ExcessMTL &
            0.6879 & 0.9164 & 0.0133 & 44.5311 \\
            FairGrad &
            0.6892 & 0.9159 & \textbf{0.0123} & 43.8715 \\
            RepMTL &
            \textbf{0.7110} & \textbf{0.9202} & \textbf{0.0123} & 39.4019 \\
            \textbf{CORE-MTL (Ours)} &
            0.7079 & 0.9191 & 0.0128 & \textbf{28.8479} \\
            \bottomrule
        \end{tabular}
        }
    \end{minipage}
\end{table}

\begin{table}[t]
    \centering
    \footnotesize
    \begin{minipage}{0.48\textwidth}
        \centering
        \caption{Cityscapes$\to$Cityscapes-C: Fog corruption.}
        \label{tab:cityc_fog}
        \resizebox{\textwidth}{!}{
        \begin{tabular}{lcccc}
            \toprule
            Method &
            Seg mIoU$\uparrow$ &
            Seg Pix Acc$\uparrow$ &
            Depth Abs Err$\downarrow$ &
            Depth Rel Err$\downarrow$ \\
            \midrule
            Equal Weighting &
            0.4567 & 0.7580 & 0.0399 & 66.4706 \\
            PCGrad &
            0.4667 & 0.7721 & 0.0358 & 65.9890 \\
            GradNorm &
            0.4696 & 0.7630 & 0.0334 & 69.7784 \\
            STCH &
            0.4556 & 0.7654 & 0.0280 & 63.6113 \\
            MTAN &
            0.4574 & 0.7673 & 0.0353 & 65.5247 \\
            MOML &
            0.4893 & 0.7934 & 0.0422 & 58.9439 \\
            RLW &
            0.4826 & 0.7881 & 0.0312 & 59.9741 \\
            ExcessMTL &
            0.4625 & 0.7705 & 0.0364 & 71.2259 \\
            FairGrad &
            0.4459 & 0.7531 & 0.0286 & 64.3044 \\
            RepMTL &
            0.4391 & 0.7670 & 0.0247 & 64.6777 \\
            \textbf{CORE-MTL (Ours)} &
            \textbf{0.5165} & \textbf{0.8143} & \textbf{0.0219} & \textbf{43.1408} \\
            \bottomrule
        \end{tabular}
        }
    \end{minipage}
    \hfill
    \begin{minipage}{0.48\textwidth}
        \centering
        \caption{Cityscapes$\to$Cityscapes-C: Gaussian Noise corruption.}
        \label{tab:cityc_gaussian}
        \resizebox{\textwidth}{!}{
        \begin{tabular}{lcccc}
            \toprule
            Method &
            Seg mIoU$\uparrow$ &
            Seg Pix Acc$\uparrow$ &
            Depth Abs Err$\downarrow$ &
            Depth Rel Err$\downarrow$ \\
            \midrule
            Equal Weighting &
            0.6580 & 0.8988 & 0.0156 & 49.8246 \\
            PCGrad &
            0.6553 & 0.8915 & 0.0150 & 49.9915 \\
            GradNorm &
            0.6607 & 0.8961 & 0.0148 & 50.1081 \\
            STCH &
            0.6580 & 0.8971 & \textbf{0.0138} & 45.2761 \\
            MTAN &
            0.6674 & 0.9017 & 0.0146 & 47.8989 \\
            MOML &
            0.6572 & 0.8951 & 0.0158 & 50.1173 \\
            RLW &
            0.6528 & 0.8919 & 0.0160 & 49.0778 \\
            ExcessMTL &
            0.6610 & 0.9004 & 0.0153 & 48.5149 \\
            FairGrad &
            0.6601 & 0.8980 & \textbf{0.0138} & 47.5155 \\
            RepMTL &
            0.6727 & 0.8978 & 0.0144 & 41.5611 \\
            \textbf{CORE-MTL (Ours)} &
            \textbf{0.6766} & \textbf{0.9023} & 0.0146 & \textbf{33.8447} \\
            \bottomrule
        \end{tabular}
        }
    \end{minipage}
\end{table}

\section{CelebA attribute list (ordered)}
\label{app:celeba_attributes}
In this section, we provide the complete list of 40 facial attributes from the CelebA dataset, presented in the specific order used across our scalability and gradient orthogonality experiments.

Table~\ref{tab:celeba_attr_order} details the mapping between the attribute names and their corresponding indices. This ordering is crucial for interpreting the gradient interaction heatmaps presented in Figure~\ref{fig:grad_conflict} of the main text. By referencing this table, readers can correlate the high-conflict or orthogonal regions in the heatmaps with specific semantic attribute pairs (e.g., correlating index 21 ``Male'' with index 37 ``Wearing Lipstick'' to understand specific task interferences), thereby providing a granular view of how CORE-MTL mitigates optimization conflicts compared to baseline methods.

\begin{table}[t]
\centering
\small
\setlength{\tabcolsep}{6pt}
\renewcommand{\arraystretch}{1.1}
\caption{CelebA attributes in the official order.}
\label{tab:celeba_attr_order}
\begin{tabular}{llll}
\toprule
1  5\_o\_Clock\_Shadow & 11 Blurry            & 21 Male                & 31 Sideburns \\
2  Arched\_Eyebrows   & 12 Brown\_Hair       & 22 Mouth\_Slightly\_Open & 32 Smiling \\
3  Attractive         & 13 Bushy\_Eyebrows   & 23 Mustache            & 33 Straight\_Hair \\
4  Bags\_Under\_Eyes  & 14 Chubby            & 24 Narrow\_Eyes        & 34 Wavy\_Hair \\
5  Bald               & 15 Double\_Chin      & 25 No\_Beard           & 35 Wearing\_Earrings \\
6  Bangs              & 16 Eyeglasses        & 26 Oval\_Face          & 36 Wearing\_Hat \\
7  Big\_Lips          & 17 Goatee            & 27 Pale\_Skin          & 37 Wearing\_Lipstick \\
8  Big\_Nose          & 18 Gray\_Hair        & 28 Pointy\_Nose        & 38 Wearing\_Necklace \\
9  Black\_Hair        & 19 Heavy\_Makeup     & 29 Receding\_Hairline  & 39 Wearing\_Necktie \\
10 Blond\_Hair        & 20 High\_Cheekbones  & 30 Rosy\_Cheeks        & 40 Young \\
\bottomrule
\end{tabular}
\end{table}

\end{document}